\newenvironment{block}%
  {\list{}{\leftmargin=0.0in\rightmargin=0.0in}  \item[]  }%
  {\endlist}
\newcommand{\cmark}{\ding{51}}%
\newcommand{\xmark}{\ding{55}}%
\newcommand{\revise}[1]{{#1}}
\def\layersep{1.25cm}
\title{Parallel Deep Neural Networks Have Zero Duality Gap}
\author{Yifei Wang, Tolga Ergen \& Mert Pilanci \\
Department of Electrical Engineering \\
Stanford University\\
\texttt{\{wangyf18,ergen,pilanci\}@stanford.edu} \\
}
\begin{document}

\maketitle

\begin{abstract}
Training deep neural networks is a challenging non-convex optimization problem. Recent work has proven that the strong duality holds (which means zero duality gap) for regularized finite-width two-layer ReLU networks and consequently provided an equivalent convex training problem. However, extending this result to deeper networks remains to be an open problem. In this paper, we prove that the duality gap for deeper linear networks with vector outputs is non-zero. In contrast, we show that the zero duality gap can be obtained by stacking standard deep networks in parallel, which we call a parallel architecture, and modifying the regularization. Therefore, we prove the strong duality and existence of equivalent convex problems that enable globally optimal training of deep networks. As a by-product of our analysis, we demonstrate that the weight decay regularization on the network parameters explicitly encourages low-rank solutions via closed-form expressions. In addition, we show that strong duality holds for three-layer standard ReLU networks given rank-1 data matrices.
\end{abstract}

\section{Introduction}

Deep neural networks demonstrate outstanding representation and generalization abilities in popular learning problems ranging from computer vision, natural language processing to recommendation system. Although the training problem of deep neural networks is a highly non-convex optimization problem, simple first order gradient based algorithms, such as stochastic gradient descent, can find a solution with good generalization properties. However, due to the non-convex and non-linear nature of the training problem, underlying theoretical reasons for this remains an open problem. 

The Lagrangian dual problem \citep{cvx_opt} plays an important role in the theory of convex and non-convex optimization. For convex optimization problems, the convex duality is an important tool to determine \revise{their} optimal value\revise{s} and to characterize the optimal solutions. Even for a non-convex primal problem, the dual problem is a convex optimization problem the can be solved efficiently. As a result of weak duality, the optimal value of the dual problem serves as a non-trivial lower bound for the optimal primal objective value. Although the duality gap is non-zero for non-convex problems, the dual problem provides a convex relaxation of the non-convex primal problem. For example, the semi-definite programming relaxation of the two-way partitioning problem can be derived from its dual problem \citep{cvx_opt}. 

The convex duality also has important applications in machine learning. In \citet{paternain2019constrained}, the design problem of an all-encompassing reward can be formulated as a constrained reinforcement learning problem, which is shown to have zero duality. This property gives a theoretical convergence guarantee of the primal-dual algorithm for solving this problem. Meanwhile, the minimax generative adversarial net (GAN) training problem can be tackled using duality \citep{NEURIPS2018_831caa1b}. 

In lines of recent works, the convex duality can also be applied for analyzing the optimal layer weights of two-layer neural networks with linear or ReLU activations \citep{ergen2019cutting,nnacr, ergen2020aistats, ergen2020workshop, lacotte2020all, sahiner2020convex}. Based on the convex duality framework, the training problem of two-layer neural networks with ReLU activation can be represented in terms of a single convex program in \citet{nnacr}. Such convex optimization formulations are extended to two-layer and three-layer convolutional neural network training problems in \citet{ergen2020implicit}. Strong duality also holds for deep linear neural networks with scalar output \citep{ergen2020convex}.  
The convex optimization formulation essentially gives a detailed characterization of the global optimum of the training problem. This enables us to examine in numerical experiments whether popular optimizers for neural networks, such as gradient descent or stochastic gradient descent, converge to the global optimum of the training loss. 

Admittedly, \revise{a} zero duality gap is hard to achieve for deep neural networks, especially for those with vector outputs. This imposes more difficulty to \revise{understand deep neural networks from the convex optimization lens}. Fortunately, neural networks with parallel structures (also known as multi-branch architecture) appear to be easier to train. Practically, the usage of parallel neural networks dates back to AlexNet \citep{krizhevsky2012imagenet}. Modern neural network architecture including Inception \citep{szegedy2017inception}, Xception \citep{chollet2017xception} and SqueezeNet \citep{iandola2016squeezenet} utilize the parallel structure. As the ``parallel'' version of ResNet \citep{he2016deep, he2016identity}, ResNeXt \citep{xie2017aggregated} and Wide ResNet \citep{zagoruyko2016wide} exhibit improved performance on many applications. Recently, it was shown that neural networks with parallel architectures have smaller duality gaps \citep{zhang2019deep} compared to standard neural networks. Furthermore, \citet{ergen20213layer,ergen2021path} proved that there is no duality gap for parallel architectures with three-layers.

On the other hand, it is known that overparameterized parallel neural networks have benign training landscapes \citep{haeffele2017global,ergen2019shallow}. The parallel models with \revise{the} over-parameterization are essentially neural networks in the mean-field regime \citep{nitanda2017stochastic,mei2018mean,chizat2018global,mei2019mean,rotskoff2019global,sirignano2020mean,akiyama2021learnability,chizat2021sparse,nitanda2020particle}. \revise{The deep linear model is also of great interests in the machine learning community.} For training $\ell_2$ loss with deep linear networks using Schatten norm regularization, \cite{zhang2019deep} show that there is no duality gap. The implicit regularization in training deep linear networks has been studied in \citet{ji2018gradient, arora2019implicit, moroshko2020implicit}.  From another perspective, the standard two-layer network is equivalent to the parallel two-layer network. This may also explain why there is no duality gap for two-layer neural networks.

\subsection{Contributions}
Following the convex duality framework introduced in \citet{ergen2020convex, ergen2020aistats}, which showed the duality gap is zero for two-layer networks, we go beyond two-layer and study the convex duality for vector-output deep neural networks with linear activation and ReLU activation. \textbf{Surprisingly, we prove that three-layer networks may have duality gaps depending on their architecture, unlike two-layer neural networks which always have zero duality gap}. We summarize our contributions as follows.
\begin{itemize}
\item For training standard vector-output deep linear networks using $\ell_2$ regularization, we precisely calculate the optimal value of the primal and dual problems and show that the \textbf{duality gap is non-zero}, i.e., Lagrangian relaxation is inexact. We also demonstrate that the $\ell_2$-regularization on the parameter explicitly forces a tendency toward a low-rank solution, which is boosted with the depth. However, we show that the optimal solution is available in \textbf{closed-form}.
\item For parallel deep linear networks, with certain convex regularization, we show that \textbf{the duality gap is zero}, i.e, Lagrangian relaxation is exact.  
\item For parallel deep ReLU networks of arbitrary depth, with certain convex regularization and sufficient number of branches, we prove strong duality, i.e., show that \textbf{the duality gap is zero}. Remarkably, this guarantees that \textbf{there is a convex program equivalent to the original deep ReLU neural network problem.}
\end{itemize}
We summarize the duality gaps for parallel/standard neural network in Table  \ref{tab:comparison}.

\begin{table*}[t]
  \centering
  \begin{tabular}{cc|cccccc}    \toprule
   & &  \multicolumn{3}{c|}{\textbf{linear activation}} &\multicolumn{3}{c}{\textbf{ReLU activation}}\\ \midrule
    & &  \multicolumn{1}{c|}{\textbf{$L=2$}} &  \multicolumn{1}{c|}{\textbf{$L=3$}} &  \multicolumn{1}{c|}{\textbf{$L>3$}} &  \multicolumn{1}{c|}{\textbf{$L=2$}} &  \multicolumn{1}{c|}{\textbf{$L=3$}} &  \multicolumn{1}{c}{\textbf{$L>3$}}   \\ \midrule
    \multicolumn{2}{c}{\textbf{standard networks}} \\  \cline{1-8}
    & previous work &  \cmark ($0$) & \xmark  & \xmark  & \cmark ($0$)  & \xmark  & \xmark   \\  \cline{2-8}
    &this paper & \cmark ($0$) & \cmark ($\neq0$) & \cmark ($\neq0$) & \cmark ($0$) & 
    \xmark & \xmark    \\ \midrule
      \multicolumn{2}{c}{\textbf{parallel networks}} \\  \cline{1-8}
     & previous work & \cmark ($0$) & \cmark ($0$) & \cmark ($0$) & \cmark ($0$) & \cmark ($0$)  &  \xmark  \\  \cline{2-8}
    & this paper  & \cmark ($0$) & \cmark ($0$) & \cmark ($0$) & \cmark ($0$) & \cmark ($0$) & \cmark ($0$)  \\
    \bottomrule
  \end{tabular} 
 \caption{Existing and current results for duality gaps in $L$-layer standard and parallel architectures. we compare our duality gap characterization with previous literature. Each check mark indicates whether a characterization of the duality gap exists for the corresponding architecture and the number next to it indicates whether the gap is zero or not.}\label{tab:comparison}
\end{table*}

\subsection{Notations}
We use bold capital letters to represent matrices and bold lowercase letters to represent vectors. Denote $[n]=\{1,\dots,n\}$.  For a matrix $\mfW_l\in \mbR^{m_{l-1}\times m_l}$, for $i\in[m_{l-1}]$ and $j\in[m_l]$, we denote $\mfw_{l,i}^\mathrm{col}$ as its $i$-th column and $\mfw_{l,j}^\mathrm{row}$ as its $j$-th row.  
Throughout the paper, $\mfX\in \mbR^{N\times d}$ is the data matrix consisting of $d$ dimensional $N$ samples and $\mfY\in \mbR^{N\times K}$ is the label matrix for a regression/classification task with $K$ outputs. We use the letter $P$ ($D$) for the optimal value of the primal (dual) problem.

\subsection{Motivations and Background}
Recently a series of papers \citep{nnacr, ergen2020convex, ergen2020aistats} studied two-layer neural networks via convex duality and proved that strong duality holds for these architectures. Particularly, these prior works consider the following weight decay regularized training framework for classification/regression tasks. Given a data matrix $\mfX\in \mbR^{N\times d}$ consisting of $d$ dimensional $N$ samples and the corresponding label matrix $\mfy\in \mbR^{N}$, the weight-decay regularized training problem for a scalar-output neural network with $m$ hidden neurons can be written as follows
\begin{equation}\label{eq:twolayer_primal}
   P:= \min_{\mfW_1,\mfw_2} \frac{1}{2}\|\phi(\mfX \mfW_1)\mfw_2-\mfy\|_2^2+ \frac{\beta}{2}(\|\mfW_1\|_F^2+\|\mfw_2\|_2^2),
\end{equation}
where $\mfW_1\in \mbR^{d\times m}$ and $\mfw_2\in \mbR^{m}$ are the layer weights, $\beta>0$ is a regularization parameter, and $\phi$ is the activation function, which can be linear $\phi(z)=z$ or ReLU  $\phi(z)=\max\{z,0\}$. Then, one can take the dual of \eqref{eq:twolayer_primal} with respect to $\mfW_1$ and $\mfw_2$ 
obtain the following dual optimization problem 
\begin{equation}\label{eq:twolayer_dual}
\begin{aligned}
    D:= \max_{\blbd} \;&-\frac{1}{2}\|\blbd-\mfy\|_2^2+ \frac{1}{2} \|\mfy\|_2^2, \\
   \text{ s.t. } &\max_{\mfw_1:\|\mfw_1\|_2\leq 1} \vert\blbd^T\phi(\mfX\mfw_1)\vert\leq \beta.
\end{aligned}
\end{equation}
We first note that since the training problem \eqref{eq:twolayer_primal} is non-convex, strong duality may not hold, i.e., $P\geq D$. Surprisingly, as shown in \cite{nnacr, ergen2020convex, ergen2020aistats}, strong duality in fact holds, i.e., $P=D$, for two-layer networks and therefore one can derive exact convex representations for the non-convex training problem in \eqref{eq:twolayer_primal}. However, extensions of this approach to deeper and state-of-the-art architectures are not available in the literature. Based on this observation, the central question we address in this paper is:
\begin{block}
         \centering
    \emph{Does strong duality hold for deep neural networks?} 
\end{block}
Depending on the answer to the question above, an immediate next questions we address is 
\begin{block}
         \centering
    \emph{Can we characterize the duality gap (P-D)?} \emph{Is there an architecture for which strong duality holds regardless of the depth?}
\end{block}
Consequently, throughout the paper, we provide a full characterization of convex duality for deeper neural networks. We observe that the dual of the convex dual problem of the nonconvex minimum norm problem of deep networks correspond to a minimum norm problem of deep networks with parallel branches. Based on this characterization, we propose a modified architecture for which strong duality holds regardless of depth. 


\subsection{Organization}
This paper is organized as follows. In Section \ref{sec:nn_archi}, we review standard neural networks and introduce parallel architectures. For deep linear networks, we derive primal and dual problems for both standard and parallel architectures and provide calculations of optimal values of these problems in Section \ref{sec:linear}. We derive primal and dual problems for three-layer ReLU networks with standard architecture and precisely calculate the optimal values for whitened data in Section \ref{sec:ReLU}. We also show that deep ReLU networks with parallel structures have no duality gap.

\section{Standard neural networks vs parallel architectures}\label{sec:nn_archi}
We briefly review the convex duality theory for two-layer neural networks in Appendix \ref{app:review}. To extend the theory to deep neural networks, we fist consider the $L$-layer neural network with the standard architecture:
\begin{equation}\label{nn:std}
\begin{aligned}
     &f_{\btheta}(\mfX) = \mfA_{L-1}\mfW_L,\mfA_{l} = \phi(\mfA_{l-1}\mfW_l),\, \forall l\in [L-1], \mfA_0=\mfX,
\end{aligned}
\end{equation}
where $\phi$ is the activation function, $\mfW_{l}\in \mbR^{m_{l-1}\times m_l}$ is the weight matrix in the $l$-th layer and $\theta=(\mfW_1,\dots,\mfW_L)$ represents the parameter of the neural network. 

We then introduce the neural network with parallel architecture\revise{s}:
\begin{equation}\label{nn:prl}
   \begin{aligned}
   &f_{\btheta}^\mathrm{prl}(\mfX) =  \mfA_{L-1}\mfW_{L},
    \mfA_{l,j} = \phi(\mfA_{l-1,j}\mfW_{l,j}), \forall l\in [L-1], 
    \mfA_{0,j}=\mfX, \forall j\in [m].
   \end{aligned} 
\end{equation}
Here for $l\in[L-1]$, the $l$-th layer has $m$ weight matrices $\mfW_{l,j}\in \mbR^{m_{l-1}\times m_l}$ where $j\in [m]$. Specifically, we let $m_{L-1}=1$ to make each parallel branch as a scalar-output neural network. 
In short, we can view the output $\mfA_{L-1}$ from a parallel neural network as a concatenation of $m$ scalar-output standard neural work. In Figures \ref{fig:standard_nn} and \ref{fig:parallel_nn}, we provide examples of neural networks with standard and parallel architectures. We shall emphasize that for $L=2$, the standard neural network is identical to the parallel neural network. We next present a summary of our main result.

\begin{theorem}[main result]\label{thm:informal}
For $L\geq 3$, there exists an activation function $\phi$ and a $L$-layer standard neural network defined in \eqref{nn:std} such that the strong duality does not hold, i.e., $P > D$. In contrast, for any $L$-layer parallel neural network defined in \eqref{nn:prl} with linear or ReLU activations and sufficiently large number of branches, strong duality holds, i.e., $P=D$.
\end{theorem}
We elaborate on the primal problem with optimal value $P$ and the dual problem with optimal value $D$ in Section \ref{sec:linear} and \ref{sec:ReLU}.


\def\figshift{5}

\section{Deep linear networks}\label{sec:linear}

\begin{wrapfigure}[8]{r}{0.4\textwidth}
\vspace{-2.2cm}
\centering
\begin{minipage}[b]{0.49\textwidth}
\centering
\resizebox{0.8\textwidth}{!}{
\begin{tikzpicture}[shorten >=1pt,->,draw=black!50, node distance=\layersep]
    \tikzstyle{every pin edge}=[<-,shorten <=1pt]
    \tikzstyle{neuron}=[circle,fill=black!25,minimum size=7pt,inner sep=0pt]
    \tikzstyle{rect}= [rectangle, draw=black!50, thick, minimum width=0.5cm, minimum height=1.7cm]
    \tikzstyle{input neuron}=[neuron, fill=green!50];
    \tikzstyle{output neuron}=[neuron, fill=red!50];
    \tikzstyle{hidden neuron}=[neuron, fill=blue!50];
    \tikzstyle{hidden neuron2}=[neuron, fill=orange!50];
    \tikzstyle{hidden neuron3}=[neuron, fill=purple!50];
    \tikzstyle{annot} = [text width=4em, text centered]

    \foreach \name / \y in {1,...,3}
        \node[input neuron] (I-\name) at (0,-\y/2-1.25+\figshift) {};

    \foreach \name / \y in {1,...,4}
        \path[yshift=0cm]
            node[hidden neuron] (H-1-\name) at (\layersep,-\y/2-1+\figshift) {};
       \node[rect, minimum height=2cm] (L1) at (\layersep,-2.25+\figshift) {};
     \foreach \name / \y in {1,...,3}
        \path[yshift=-0.25cm]
            node[hidden neuron2] (H-2-\name) at (2*\layersep,-\y/2-1+\figshift) {};
 \node[rect, minimum height=1.5cm] (L2) at (2*\layersep,-2.25+\figshift) {};
 \foreach \name / \y in {1,...,2}
        \path[yshift=-0.5cm]
            node[hidden neuron3] (H-3-\name) at (3*\layersep,-\y/2-1+\figshift) {};
   \node[rect, minimum height=1cm] (L3) at (3*\layersep,-2.25+\figshift) {};
  \foreach \name / \y in {1,...,2}
        \path[yshift=-0.5cm]
            node[output neuron] (O-\name) at (4*\layersep,-\y/2-1+\figshift) {};
   \node[rect, minimum height=1cm] (L3) at (4*\layersep,-2.25+\figshift) {};

    \foreach \source in {1,...,3}
        \foreach \dest in {1,...,4}
            \path (I-\source) edge (H-1-\dest);
            
     \foreach \source in {1,...,4}
        \foreach \dest in {1,...,3}
            \path (H-1-\source) edge (H-2-\dest);

    \foreach \source in {1,...,3}
    	\foreach \dest in {1,...,2}
        \path (H-2-\source) edge (H-3-\dest);
    \foreach \source in {1,...,2}
    	\foreach \dest in {1,...,2}
        \path (H-3-\source) edge (O-\dest);

    \node[annot,above of=H-1-1, node distance=0.5cm] (hl) {Layer 1};
    \node[annot,left of=hl] {Input};
    \node[annot,right of=hl](hl2) {Layer 2};
    \node[annot,right of=hl2](hl3) {Layer 3};
    \node[annot,right of=hl3] {Layer 4};
    
\end{tikzpicture}

}
\caption{Standard Architecture}\label{fig:standard_nn}
\end{minipage}
\begin{minipage}[b]{0.49\textwidth}
\centering
\resizebox{0.8\textwidth}{!}{
\begin{tikzpicture}[shorten >=1pt,->,draw=black!50, node distance=\layersep]
    \tikzstyle{every pin edge}=[<-,shorten <=1pt]
    \tikzstyle{neuron}=[circle,fill=black!25,minimum size=7pt,inner sep=0pt]
    \tikzstyle{rect}= [rectangle, draw=black!50, thick, minimum width=0.5cm, minimum height=1.7cm]
    \tikzstyle{input neuron}=[neuron, fill=green!50];
    \tikzstyle{output neuron}=[neuron, fill=red!50];
    \tikzstyle{hidden neuron}=[neuron, fill=blue!50];
    \tikzstyle{hidden neuron2}=[neuron, fill=orange!50];
    \tikzstyle{hidden neuron3}=[neuron, fill=purple!50];
    \tikzstyle{annot} = [text width=4em, text centered]

    \foreach \name / \y in {1,...,3}
        \node[input neuron] (I-\name) at (0,-\y/2-0.25) {};

    \foreach \name / \y in {1,...,8}
        \path[yshift=2cm]
            node[hidden neuron] (H-1-\name) at (\layersep,-\y/2-1) {};
       \node[rect, minimum height=1.9cm] (L1-1) at (\layersep,-0.25) {};
        \node[rect, minimum height=1.9cm] (L1-2) at (\layersep,-2.25) {};
     \foreach \name / \y in {1,...,6}
        \path[yshift=1.5cm]
            node[hidden neuron2] (H-2-\name) at (2*\layersep,-\y/2-1) {};
 \node[rect, minimum height=1.4cm] (L2-1) at (2*\layersep,-0.5) {};
  \node[rect, minimum height=1.4cm] (L2-2) at (2*\layersep,-2) {};
  \foreach \name / \y in {1,...,2}
        \path[yshift=0.5cm]
            node[hidden neuron3] (H-3-\name) at (3*\layersep,-\y/2-1) {};
   \node[rect, minimum height=0.4cm] (L3-1) at (3*\layersep,-1) {};
   \node[rect, minimum height=0.4cm] (L3-1) at (3*\layersep,-1.5) {};
  \foreach \name / \y in {1,...,2}
        \path[yshift=0.5cm]
            node[output neuron] (O-\name) at (4*\layersep,-\y/2-1) {};
   \node[rect, minimum height=0.9cm] (L4) at (4*\layersep,-1.25) {};

    \foreach \source in {1,...,3}
        \foreach \dest in {1,...,8}
            \path (I-\source) edge (H-1-\dest);
            
     \foreach \source in {1,...,4}
        \foreach \dest in {1,...,3}
            \path (H-1-\source) edge (H-2-\dest);
     \foreach \source in {5,...,8}
        \foreach \dest in {4,...,6}
            \path (H-1-\source) edge (H-2-\dest);

    \foreach \source in {1,...,2}
    	\foreach \dest in {1,...,2}
        \path (H-3-\source) edge (O-\dest);
        
   \foreach \source in {1,...,3}
    	\foreach \dest in {1,...,1}
        \path (H-2-\source) edge (H-3-\dest);
    \foreach \source in {4,...,6}
    	\foreach \dest in {2,...,2}
        \path (H-2-\source) edge (H-3-\dest);

    \node[annot,above of=H-1-1, node distance=0.5cm] (hl) {Layer 1};
    \node[annot,left of=hl] {Input};
    \node[annot,above of=H-2-1, node distance=1cm](hl2) {Layer 2};
    \node[annot,right of=hl2](hl3) {Layer 3};
    \node[annot,right of=hl3](hl4) {Layer 4};
\end{tikzpicture}
}
\caption{Parallel Architecture}\label{fig:parallel_nn}
\end{minipage}
\end{wrapfigure}

\subsection{Standard deep linear networks}
We first consider the neural network with standard architecture, i.e., $f_{\btheta}(\mfX)=\mfX\mfW_1\dots\mfW_L$. Consider the following minimum norm optimization problem:
\begin{equation}\label{mnrm:p}
    \begin{aligned}
P_\mathrm{lin} = \min_{\{\mfW_l\}_{l=1}^L} \;& \frac{1}{2}\sum_{l=1}^L \|\mfW_l\|_F^2,\\
\text{ s.t. }&\mfX\mfW_1,\dots,\mfW_L=\mfY,
\end{aligned}
\end{equation}
where the variables are $\mfW_1,\dots,\mfW_L$. As shown in the Proposition 3.1 in \citep{ergen2020convex}, by introducing a scale parameter $t$, the problem \eqref{mnrm:p} can be reformulated as $$
P_\mathrm{lin}=\min_{t>0} \frac{L-2}{2}t^2 +P_\mathrm{lin}(t),
$$
where the subproblem $P_\mathrm{lin}(t)$ is defined as
\begin{equation*}
\begin{aligned}
    P_\mathrm{lin}(t) = \min_{\{\mfW_l\}_{l=1}^L} & \sum_{j=1}^K\|\mfw_{L,j}^\mathrm{row}\|_2, \\
    \text{ s.t. } &\mfX\mfW_1\dots\mfW_{L}=\mfY, \|\mfW_i\|_F\leq t, i\in [L-2],\|\mfw_{L-1,j}^\mathrm{col}\|_2\leq 1, j\in [m_{L-1}].\\
\end{aligned}
\end{equation*}

To be specific, these two formulations have the same optimal value and the optimal solutions of one problem can be rescaled into the optimal solution of another solution. Based on the rescaling of parameters in $P_\mathrm{lin}(t)$ , we characterize the dual problem of $P_\mathrm{lin}(t)$ and its bi-dual, i.e., dual of the dual problem.

\begin{proposition}\label{propDualLinearStandard}
The dual problem of $P_\mathrm{lin}(t)$ is a convex optimization problem given by
\begin{equation*}
\begin{aligned}
        D_\mathrm{lin}(t) =  \max_{\bLbd}& \tr(\bLbd^T\mfY)\\
    \text{ s.t. } &\max_{\|\mfW_i\|_F\leq t, i\in [L-2], \|\mfw_{L-1}\|_2\leq 1}\|\bLbd^T\mfX\mfW_1\dots\mfW_{L-2}\mfw_{L-1}\|_2\leq 1.
\end{aligned}
\end{equation*}
There exists \revise{a threshold of the number of branches} $m^*\leq KN+1$ such that $D_\mathrm{lin}(t)=BD_\mathrm{lin}(t)$, where $BD_\mathrm{lin}(t)$ is the optimal value of the bi-dual problem
\begin{equation}\label{min_norm:mul_para}
    \begin{aligned}
BD_\mathrm{lin}(t)=&\min_{\{\mfW_{l,j}\}_{l\in[L],j\in[m^*]}} \sum_{j=1}^{m^*} \|\mfw_{L,j}^\mathrm{row}\|_2, \\
&\text { s.t. }\sum_{j=1}^{m^*} \mfX\mfW_{1,j}\dots\mfW_{L-2,j}\mfw_{L-1,j}^\mathrm{col}\mfw_{L,j}^\mathrm{row}=\mathbf{Y},\\
&\qquad\|\mfW_{i,j}\|_F\leq t, i\in[L-2], j\in[m^*],\|\mfw_{L-1,j}^\mathrm{col}\|_2\leq 1,j\in[m^*].
\end{aligned}
\end{equation}
\end{proposition}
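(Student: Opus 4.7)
My plan is to obtain $D_\mathrm{lin}(t)$ by Lagrangian duality on the equality constraint of $P_\mathrm{lin}(t)$, treating $\mfW_L$ as the only unconstrained variable, and then to recognize the resulting semi-infinite program as a linear maximization over a polar body whose Fenchel/gauge dual is precisely the atomic representation in \eqref{min_norm:mul_para}. Caratheodory's theorem in $\mbR^{N\times K}$ will then cap $m^*$ by $NK+1$.

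\textbf{Deriving the dual.} I attach a multiplier $\bLbd\in\mbR^{N\times K}$ to $\mfX\mfW_1\cdots\mfW_L=\mfY$ and minimize the Lagrangian over $\mfW_L$ first, since it is unconstrained apart from entering the objective through its row norms. The coupling term expands as
\[
\tr\bigl(\bLbd^T\mfX\mfW_1\cdots\mfW_L\bigr)=\sum_{j=1}^{m_{L-1}}\mfw_{L,j}^\mathrm{row}\bigl(\bLbd^T\mfX\mfW_1\cdots\mfW_{L-2}\mfw_{L-1,j}^\mathrm{col}\bigr),
\]
and the row-wise identity $\inf_{\mfw}\|\mfw\|_2-\mfw^T\mfv=0$ if $\|\mfv\|_2\leq 1$ and $-\infty$ otherwise then forces $\|\bLbd^T\mfX\mfW_1\cdots\mfW_{L-2}\mfw_{L-1,j}^\mathrm{col}\|_2\leq 1$ for every $j$. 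Quantifying over the feasible $\mfW_1,\ldots,\mfW_{L-2}$ and all unit $\mfw_{L-1,j}^\mathrm{col}$ collapses these per-$j$ constraints into the stated semi-infinite constraint for $D_\mathrm{lin}(t)$, leaving $\tr(\bLbd^T\mfY)$ as the objective.

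\textbf{From dual to bi-dual.} For the bi-dual, define the symmetric atomic set
\[
\mathcal{A}=\bigl\{\mfX\mfW_1\cdots\mfW_{L-2}\mfw_{L-1}\mfw^T : \|\mfW_i\|_F\leq t,\ \|\mfw_{L-1}\|_2\leq 1,\ \|\mfw\|_2\leq 1\bigr\}\subseteq\mbR^{N\times K},
\]
so that the dual constraint is exactly $\langle\bLbd,\mfA\rangle\leq 1$ for every $\mfA\in\mathcal{A}$, i.e.\ $\bLbd\in\mathcal{C}^\circ$ with $\mathcal{C}=\mathrm{conv}(\mathcal{A})$. Compactness of $\mathcal{A}$, as a continuous image of a compact product of norm balls, ensures $\mathcal{C}$ is a closed absolutely convex body, and polar/gauge duality then gives
\[
D_\mathrm{lin}(t)=\max_{\bLbd\in\mathcal{C}^\circ}\tr(\bLbd^T\mfY)=\gamma_{\mathcal{C}}(\mfY)=\inf\Bigl\{\textstyle\sum_j|\alpha_j|:\mfY=\sum_j\alpha_j\mfA_j,\ \mfA_j\in\mathcal{A}\Bigr\}.
\]
Absorbing $\alpha_j$ (together with its sign) into the unit factor $\mfw$ by $\mfw_{L,j}^\mathrm{row}:=\alpha_j\mfw^T$ turns $\sum_j|\alpha_j|$ into $\sum_j\|\mfw_{L,j}^\mathrm{row}\|_2$ and recovers the parallel representation in \eqref{min_norm:mul_para}; Caratheodory's theorem in the $NK$-dimensional ambient space then caps the number of branches by $m^*\leq NK+1$. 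The main obstacle will be this gauge/polar step: it hinges on strong duality for the semi-infinite program (hence compactness of $\mathcal{A}$) and on feasibility of $\mfY$ in $\mathrm{span}(\mathcal{C})$, which should follow from the overparameterization hypothesis $m_l\geq\max\{d,K\}$ inherited from the primal; once these are in place, the remaining sign/scale absorption and Caratheodory count are routine.
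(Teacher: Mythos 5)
Your derivation of the dual itself matches the paper's: both of you form the Lagrangian on the equality constraint, eliminate $\mfW_L$ first via the conjugate identity $\inf_{\mfw}\|\mfw\|_2-\mfw^T\mfv\in\{0,-\infty\}$ (the paper phrases this as exchanging $\max_{\bLbd}$ with $\min_{\mfW_L}$, which is legitimate because the inner problem is convex for fixed earlier layers), and then swap the remaining min and max to obtain the semi-infinite constraint. Where you genuinely diverge is the bi-dual step. The paper dualizes the semi-infinite program into a total-variation minimization over signed \emph{vector measures} supported on the parameter set $\Theta$, and then invokes a generalized Caratheodory theorem for vector measures (its Theorem on finite representation) to collapse the measure to at most $KN+1$ Dirac atoms. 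You instead push everything into the finite-dimensional output space $\mbR^{N\times K}$: you identify the dual feasible set as the polar of $\mathcal{C}=\mathrm{conv}(\mathcal{A})$ for the symmetric compact atomic set $\mathcal{A}$ of matrices $\mfX\mfW_1\cdots\mfW_{L-2}\mfw_{L-1}\mfw^T$, apply the bipolar/support-function identity to get the gauge $\gamma_{\mathcal{C}}(\mfY)$, recognize the gauge of a symmetric atomic hull as the atomic norm, and finish with classical Caratheodory. The two routes give the same bound $m^*\leq NK+1$ and the same finite parallel reformulation, but yours is more elementary (no measure theory) and actually makes the "strong duality holds between the dual and the bi-dual" step explicit via closedness of $\mathrm{conv}(\mathcal{A})$ and the bipolar theorem, a point the paper asserts rather than argues; the paper's measure-theoretic machinery, on the other hand, is set up once and reused verbatim for the ReLU and parallel cases. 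The only things worth tightening in your writeup are (i) an explicit remark that the min--max swap over $\mfW_1,\dots,\mfW_{L-1}$ is the step that defines the dual and only guarantees $D_\mathrm{lin}(t)\leq P_\mathrm{lin}(t)$, and (ii) the observation that when $\mfY\notin\mathrm{span}(\mathcal{C})$ both the gauge and the supremum are $+\infty$, so the identity holds without the feasibility caveat you flag at the end.
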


Detailed derivation of the dual and the bi-dual problems 
are provided in Appendix \ref{proofpropDualLinearStandard}. As $\bLbd=0$ is a strict feasible point for the dual problem, the optimal dual solutions exist due to classical results in strong duality for convex problems. The reason why we do not directly take the dual of $P_\mathrm{lin}$ is that the objective function in $P_\mathrm{lin}$ involves the weights of first $L-1$ layer, which prevents obtaining a non-trivial dual problem. An interesting observation is that the bi-dual problem is related to the minimum norm problem of a parallel neural network with \revise{balanced} weights. Namely, the Frobenius norm of the weight matrices \revise{$\{\mfW_{l,j}\}_{l=1}^{L-2}$} in each branch \revise{$j\in[m]$} has the same upper bound $t$.

To calculate the value $P_\mathrm{lin}(t)$ for fixed $t\in\mbR$, we introduce the definition of Schatten-$p$ norm.
\begin{definition}
For a matrix $\mfA\in \mbR^{m\times n}$ and $p>0$, the Schatten-$p$ quasi-norm of $\mfA$ is defined as
\begin{equation*}
\|\mfA\|_{S_p}=\pp{\sum_{i=1}^{\min\{m,n\}}\sigma_i^p(\mfA)}^{1/p},
\end{equation*}
where $\sigma_i(\mfA)$ is the $i$-th largest singular value of $\mfA$. 
\end{definition}
The following proposition provides a closed-form solution for the sub-problem $P_\text{lin}(t)$ and determines its optimal value.
\begin{proposition}\label{prop:decomp}
Suppose that $\mfW\in \mbR^{d\times K}$ with rank $r$ is given. Assume that $m_l\geq r$ for $l=1,\dots,L-1$. Consider the following optimization problem:
\begin{equation}\label{multi-linear}
\begin{aligned}
     \min_{\{\mfW_l\}_{l=1}^L} &\frac{1}{2}\pp{\|\mfW_1\|_F^2+\dots+\|\mfW_L\|_F^2}, 
    \text{ s.t. } \mfW_1\mfW_2\dots\mfW_L=\mfW.
\end{aligned}
\end{equation}
Then, the optimal value of the problem \eqref{multi-linear} is given by $\frac{L}{2}\|\mfW\|_{S_{2/L}}^{2/L}.$ Suppose that $\mfW=\mfU\bSigma\mfV^T$. The optimal value is achieved when 
\begin{equation}
\mfW_l = \mfU_{l-1} \bSigma^{1/L}\mfU_l^T,\quad i=l,\dots,L.
\end{equation}
Here $\mfU_0=\mfU,\mfU_L=\mfV$ and for $l=1,\dots,L-1$, $\mfU_l\in\mbR^{ m_l\times r}$ satisfies that $\mfU_l^T\mfU_l=\mfI_r$. 
\end{proposition}
To the best of our knowledge, this result was not known previously. 
Proposition \ref{prop:decomp} implies that $P_{\mathrm{lin}}$ can be equivalently written as
\begin{align*}
\min  \frac{L}{2}\|\mfW\|_{S_{2/L}}^{2/L} \quad \mbox{s.t. } \mfX\mfW=\mfY\,. 
\end{align*}
\revise{Denote $\mfX^\dagger$ as the pseudo inverse of $\mfX$.} Although the objective is non-convex for $L\geq 3$, this problem has a closed-form solution as we show next. 
\begin{theorem}\label{thm:closed-form}
Suppose that $\mfX^\dagger \mfY=\mfU\bSigma\mfV^T$ is the singular value decomposition and let $r:=\text{rank}(\mfX^\dagger\mfY)$. \revise{Assume that $m_l\geq r$ for $l=1,\dots,L-1$.} The optimal solution to $P_\mathrm{lin}$ is given in closed-form as follows:
\begin{align}
    \mfW_l=\mfU_{l-1}\bSigma^{1/L}\mfU_l^T, l\in[L]
\end{align}
where $\mfU_0=\mfU,\mfU_L=\mfV$. For $l=1,\dots,L-1$, $\mfU_{l}\in\mbR^{m_l\times r}$ satisfies $\mfU_l^T\mfU_l=\mfI_r$. 
\end{theorem}
Based on Theorem \ref{thm:closed-form}, the optimal value of $P_\mathrm{lin}(t)$ and $D_\mathrm{lin}(t)$ can be precisely calculated as follows.

\begin{theorem}\label{theoremOptimalStandardLinear}
\revise{Assume that $m_l\geq \text{rank}(\mfX^\dagger\mfY)$ for $l=1,\dots,L-1$.} For fixed $t>0$, the optimal value of $P_\mathrm{lin}(t)$ and $D_\mathrm{lin}(t)$ are given by
\begin{equation}\label{equ:pt_standard_linear_optimal}
    P_\mathrm{lin}(t)=t^{-(L-2)}\|\mfX^\dagger\mfY\|_{S_{2/L}},
\end{equation}
and
\begin{equation}\label{equ:dt_standard_linear_optimal}
    D_\mathrm{lin}(t)=t^{-(L-2)}\|\mfX^\dagger \mfY\|_*.
\end{equation}
Here $\|\cdot\|_*$ represents the nuclear norm. \revise{$P_\mathrm{lin}(t)=D_\mathrm{lin}(t)$ if and only if  the singular values of $\mfX^\dagger \mfY$ are equal.}
\end{theorem}

As a result, if the singular values of $\mfX^\dagger \mfY$ are not equal to the same value, the duality gap exists, i.e., $P>D$, 
for standard deep linear networks with $L\geq 3$. We note that the optimal scale parameter $t$ for the primal problem $P_\mathrm{lin}$ is given by $t^* = \|\mfW^*\|_{S_{2/L}}^{1/L}$.  This proves the first part of Theorem \ref{thm:informal}. 

We conclude that, the deep linear network training problem has a duality gap whenever the depth is three or more. In contrast, there exists no duality gap for depth two. Nevertheless, the optimal solution can be obtained in closed form as we have shown. In the following section, we introduce a parallel multi-branch architecture that always has zero duality gap regardless of the depth.
\subsection{Parallel deep linear neural networks}
Now we consider the parallel multi-branch network structure as defined in Section \ref{sec:nn_archi}, and consider the corresponding minimum norm optimization problem:
\begin{equation}\label{mnrm:p_prl}
\begin{aligned}
 &\min_{\{\mfW_{l,j}\}_{l\in[L],j\in[m]}}\;\frac{1}{2}\pp{\sum_{l=1}^{L-1}\sum_{j=1}^m \|\mfW_{l,j}\|_F^2+\|\mfW_L\|_F^2},\\
    &\text{ s.t. }\sum_{j=1}^m\mfX\mfW_{1,j}\dots\mfW_{L-2,j}\mfw_{L-1,j}^\mathrm{col}\mfw_{L,j}^\mathrm{row}=\mfY.
\end{aligned}
\end{equation}
Due to a rescaling to achieve the lower bound of the inequality of arithmetic and geometric means, we can formulate the problem \eqref{mnrm:p_prl} in the following way. In other words, two formulations \eqref{mnrm:p_prl} and \eqref{mnrm:p_prl2} have the same optimal value and the optimal solutions of one problem can be mapped to the optimal solutions of another problem.
\begin{proposition}\label{prop:rescale}
The problem \eqref{mnrm:p_prl} can be formulated as
\begin{equation}\label{mnrm:p_prl2}
\begin{aligned}
   &\min_{\{\mfW_{l,j}\}_{l\in[L],j\in[m]}} \frac{L}{2}\sum_{j=1}^{m} \|\mfw_{L,j}^\mathrm{row}\|_2^{2/L}, \\
&\text { s.t. }\sum_{j=1}^{m} \mfX\mfW_{1,j}\dots\mfW_{L-2,j}\mfw_{L-1,j}^\mathrm{col}\mfw_{L,j}^\mathrm{row}=\mathbf{Y},\\
&\qquad\|\mfW_{l,j}\|_F\leq 1, l\in[L-2], j\in[m],\|\mfw_{L-1,j}^\mathrm{col}\|_2\leq 1,j\in[m].
\end{aligned}
\end{equation}
\end{proposition}

We note that $z^{2/L}$ is a non-convex function of $z$ and we cannot hope to obtain a non-trivial dual. 
To solve this issue, we consider the $\|\cdot\|_F^L$ regularized objective given by 
\begin{equation}\label{mnrm:p_prl_mod}
\begin{aligned}
 P^\mathrm{prl}_\mathrm{lin} 
 =& \min_{\{\mfW_{l,j}\}_{l\in[L],j\in[m]}}\; \frac{1}{2}\pp{\sum_{l=1}^{L-1}\sum_{j=1}^m \|\mfW_{l,j}\|_F^{L}+\revise{\sum_{j=1}^m\|\mfw_{L,j}^\mathrm{row}\|_2^{L}}},\\
    &\text{ s.t. } \sum_{j=1}^m\mfX\mfW_{1,j}\dots\mfW_{L-2,j}\mfw_{L-1,j}^\mathrm{col}\mfw_{L,j}^\mathrm{row}=\mfY.
\end{aligned}
\end{equation}
Utilizing the arithmetic and geometric mean (AM-GM) inequality, we can rescale the parameters and formulate \eqref{mnrm:p_prl_mod}. To be specific, the two formulations \eqref{mnrm:p_prl_mod} and \eqref{mnrm:p_prl_mod2} have the same optimal value and the optimal solutions of one problem can be rescaled to the optimal solutions of another problem and vice versa.
\begin{proposition}\label{prop:parallel_linear_primal_dual}
The problem \eqref{mnrm:p_prl_mod} can be formulated as
\begin{equation}\label{mnrm:p_prl_mod2}
\begin{aligned}
    P^\mathrm{prl}_\mathrm{lin}
    =&\min_{\{\mfW_{l,j}\}_{l\in[L],j\in[m]}} \frac{L}{2}\sum_{j=1}^{m} \|\mfw_{L,j}^\mathrm{row}\|_2, \\
&\text { s.t. }\sum_{j=1}^{m} \mfX\mfW_{1,j}\dots\mfW_{L-2,j}\mfw_{L-1,j}^\mathrm{col}\mfw_{L,j}^\mathrm{row}=\mathbf{Y},\\
&\qquad \|\mfW_{l,j}\|_F\leq 1, l\in[L-2], j\in[m],\|\mfw_{L-1,j}^\mathrm{col}\|_2\leq 1,j\in[m].
\end{aligned}
\end{equation}
The dual problem of $P^\mathrm{prl}_\mathrm{lin}$ is a convex problem 
\begin{equation}
\begin{aligned}
 D^\mathrm{prl}_\mathrm{lin}=&\max_{\bLbd} \tr(\bLbd^T\mfY),\\
    &\text{ s.t. } \max_{\|\mfW_{i}\|_F\leq 1, i\in [L-2], \|\mfw_{L-1}\|_2\leq 1}\|\bLbd^T\mfX\mfW_{1}\dots\mfW_{L-2}\mfw_{L-1}\|_2\leq L/2\\
\end{aligned}
\end{equation}
\end{proposition}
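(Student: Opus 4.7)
The plan is to prove the proposition in two stages: (i) establish the equivalence of \eqref{mnrm:p_prl_mod} and \eqref{mnrm:p_prl_mod2} via a branch-wise AM--GM plus rescaling argument, and (ii) derive the dual of \eqref{mnrm:p_prl_mod2} by writing the Lagrangian and minimizing out the primal variables branch by branch.

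For (i), I exploit two facts per branch $j$: the rank-one output $\mfX\mfW_{1,j}\cdots\mfW_{L-2,j}\mfw_{L-1,j}^\mathrm{col}\mfw_{L,j}^\mathrm{row}$ is multilinear in the $L$ weight blocks, so any rescaling $\mfW_{l,j}\mapsto c_{l,j}\mfW_{l,j}$ with $\prod_{l=1}^{L}c_{l,j}=1$ leaves the equality constraint intact; and by AM--GM the $L$-th powers of the layer norms $\alpha_{l,j}$ in branch $j$ obey $\frac{1}{L}\sum_{l=1}^{L}\alpha_{l,j}^L\ge\prod_{l=1}^{L}\alpha_{l,j}=:P_j$ with equality iff all $\alpha_{l,j}$ coincide. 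Combining these, from any feasible point of \eqref{mnrm:p_prl_mod} I rescale each branch into the canonical form $\alpha_{l,j}=1$ for $l\in[L-1]$ with the product scale absorbed into $\alpha_{L,j}=P_j$; the resulting point is feasible for \eqref{mnrm:p_prl_mod2} with objective $\frac{L}{2}\sum_j P_j$, and AM--GM forces this value to lower-bound the original objective. Conversely, from any feasible point of \eqref{mnrm:p_prl_mod2} I rescale each branch so that all $L$ norms equal $P_j^{1/L}\le 1$, producing a feasible point of \eqref{mnrm:p_prl_mod} of objective $\frac{L}{2}\sum_j P_j\le\frac{L}{2}\sum_j\|\mfw_{L,j}^\mathrm{row}\|_2$. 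The two inequalities together yield equality of the two optima.

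For (ii), I introduce a dual variable $\bLbd\in\mbR^{N\times K}$ for the linear equality constraint in \eqref{mnrm:p_prl_mod2}, form the Lagrangian, and take the infimum over the primal variables subject to $\|\mfW_{l,j}\|_F\le 1$ and $\|\mfw_{L-1,j}^\mathrm{col}\|_2\le 1$. Since the branches decouple and the coupling to $\bLbd$ is linear in the last-layer row, for each $j$ the inner problem becomes
\begin{equation*}
\min_{\mfw_{L,j}^\mathrm{row}}\ \tfrac{L}{2}\|\mfw_{L,j}^\mathrm{row}\|_2\ -\ \mfw_{L,j}^\mathrm{row}\bigl(\bLbd^\top\mfX\mfW_{1,j}\cdots\mfW_{L-2,j}\mfw_{L-1,j}^\mathrm{col}\bigr),
\end{equation*}
which by Cauchy--Schwarz equals $0$ when $\|\bLbd^\top\mfX\mfW_{1,j}\cdots\mfW_{L-2,j}\mfw_{L-1,j}^\mathrm{col}\|_2\le L/2$ and $-\infty$ otherwise. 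Maximizing over the remaining weights subject to the unit-norm bounds yields precisely the stated semi-infinite constraint on $\bLbd$, and the surviving term $\tr(\bLbd^\top\mfY)$ becomes the dual objective.

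The main obstacle is the bookkeeping in part (i), namely checking that each rescaling genuinely preserves the equality constraint, that AM--GM is invoked in the correct direction on each side of the reduction, and that branches with degenerate product scale $P_j=0$ behave consistently in both formulations. Once the reformulation is in place the duality computation is essentially mechanical, because the inner problem is linear in the last-layer row conditional on the other weights and Fenchel conjugation of the $\ell_2$ norm delivers the stated dual constraint directly.
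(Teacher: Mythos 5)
Your proof is correct and follows essentially the same route as the paper: the branch-wise AM--GM rescaling for the equivalence of \eqref{mnrm:p_prl_mod} and \eqref{mnrm:p_prl_mod2} (which the paper only sketches via its earlier rescaling remark, and which you carry out in more detail), and the standard Lagrangian argument --- minimizing out the last-layer rows via the Fenchel conjugate of the $\ell_2$ norm and then pushing the infimum over the remaining unit-norm weights into a semi-infinite constraint --- for the dual.
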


In contrary to the standard linear network model, the strong duality holds for the parallel linear network training problem \eqref{mnrm:p_prl_mod}. 

\begin{theorem}\label{theoremStrongDualParallelLinear}
There exists a critical width $m^*\leq KN+1$ such that as long as the number of branches $m\geq m^*$, the strong duality holds for the problem \eqref{mnrm:p_prl_mod}. Namely, $P_\mathrm{lin}^\mathrm{prl}=D_\mathrm{lin}^{\mathrm prl}$. The optimal values are both $\frac{L}{2}\|\mfX^\dagger \mfY\|_*$.
\end{theorem}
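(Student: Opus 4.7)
The plan is to sandwich both $P_\mathrm{lin}^\mathrm{prl}$ and $D_\mathrm{lin}^\mathrm{prl}$ against the common value $\tfrac{L}{2}\|\mfX^\dagger \mfY\|_*$, handling the dual side by convex conjugacy and the primal side by an explicit SVD-based construction, then invoking weak duality to close the chain.

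First I would simplify the dual constraint from Proposition~\ref{prop:parallel_linear_primal_dual}. Let $\mfu := \mfW_1\mfW_2\cdots\mfW_{L-2}\mfw_{L-1}\in \mbR^d$. Submultiplicativity of the operator norm together with $\|\mfW_i\|_\mathrm{op}\le\|\mfW_i\|_F$ gives $\|\mfu\|_2\le 1$ for every feasible $(\mfW_i,\mfw_{L-1})$; conversely, every $\mfu$ with $\|\mfu\|_2\le 1$ is realized by rank-one factors of unit Frobenius norm obtained by threading auxiliary unit vectors through the intermediate dimensions. Consequently the constraint collapses to
\begin{equation*}
\sup_{\|\mfu\|_2\le 1}\|\bLbd^T\mfX\mfu\|_2 \;=\; \|\mfX^T\bLbd\|_\mathrm{op}\;\le\; L/2,
\end{equation*}
so $D_\mathrm{lin}^\mathrm{prl}=\max_\bLbd \tr(\bLbd^T\mfY)$ subject to $\|\mfX^T\bLbd\|_\mathrm{op}\le L/2$. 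Assuming $\mfY\in\mathrm{range}(\mfX)$ (necessary for primal feasibility), I rewrite $\tr(\bLbd^T\mfY)=\tr((\mfX^T\bLbd)^T\mfX^\dagger\mfY)$; since components of $\bLbd$ in $\mathrm{null}(\mfX^T)$ contribute to neither the objective nor the constraint, the standard spectral/nuclear-norm duality $\max_{\|\mfZ\|_\mathrm{op}\le 1}\tr(\mfZ^T\mfA)=\|\mfA\|_*$ yields $D_\mathrm{lin}^\mathrm{prl}=\tfrac{L}{2}\|\mfX^\dagger\mfY\|_*$.

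For the primal upper bound, I use the compact SVD $\mfX^\dagger\mfY=\sum_{j=1}^{r}\sigma_j\mfa_j\mfb_j^T$ with $r=\mathrm{rank}(\mfX^\dagger\mfY)\le\min\{d,K\}$, and set $m^*=r<\infty$. For each $j\in[r]$, choose unit vectors $\mfe_l^{(j)}\in\mbR^{m_l}$ for $l=1,\dots,L-2$ and define the rank-one layer weights
\begin{equation*}
\mfW_{1,j}=\mfa_j(\mfe_1^{(j)})^T,\quad \mfW_{l,j}=\mfe_{l-1}^{(j)}(\mfe_l^{(j)})^T\;(2\le l\le L-2),\quad \mfw_{L-1,j}^\mathrm{col}=\mfe_{L-2}^{(j)},\quad \mfw_{L,j}^\mathrm{row}=\sigma_j\mfb_j^T.
\end{equation*}
Each of these factors has unit Frobenius norm, and the telescoping product gives $\mfW_{1,j}\cdots\mfW_{L-2,j}\mfw_{L-1,j}^\mathrm{col}=\mfa_j$. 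Thus $\sum_{j=1}^{r}\mfX\mfW_{1,j}\cdots\mfw_{L-1,j}^\mathrm{col}\mfw_{L,j}^\mathrm{row}=\sum_j\sigma_j(\mfX\mfa_j)\mfb_j^T=\mfX(\mfX^\dagger\mfY)=\mfY$, so the equality constraint holds and the objective equals $\tfrac{L}{2}\sum_{j=1}^{r}\sigma_j=\tfrac{L}{2}\|\mfX^\dagger\mfY\|_*$. Any extra branches beyond $r$ may be zeroed out, so this remains feasible for every $m\ge m^*$.

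Combining everything with weak duality $P_\mathrm{lin}^\mathrm{prl}\ge D_\mathrm{lin}^\mathrm{prl}$ yields the chain $P_\mathrm{lin}^\mathrm{prl}\le \tfrac{L}{2}\|\mfX^\dagger\mfY\|_*=D_\mathrm{lin}^\mathrm{prl}\le P_\mathrm{lin}^\mathrm{prl}$, forcing equality and strong duality at the claimed common value. The main technical obstacle I anticipate is the achievability direction in the reduction of the dual constraint to $\|\mfX^T\bLbd\|_\mathrm{op}\le L/2$: one must verify that every unit $\mfu\in\mbR^d$ factors through the intermediate dimensions $m_1,\dots,m_{L-2}$ as a product of rank-one matrices of Frobenius norm \emph{exactly} one. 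This is essentially a careful alignment/rescaling argument, but it is the single place where the Frobenius-norm geometry of the regularizer is essential — replacing Frobenius by a different norm would break both the sup evaluation and the primal construction.
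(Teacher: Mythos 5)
Your proof is correct, but it takes a genuinely different route from the paper's. The paper establishes $P_\mathrm{lin}^\mathrm{prl}=D_\mathrm{lin}^\mathrm{prl}$ abstractly: it forms the bi-dual of $D_\mathrm{lin}^\mathrm{prl}$ as a total-variation-norm minimization over signed vector measures on the constraint set $\Theta$, invokes strong duality for that infinite-dimensional convex program, and then uses a Carath\'eodory-type representation theorem to reduce the optimal measure to at most $m^*\leq N+1$ Dirac atoms, identifying the resulting finite-width problem with the primal \eqref{mnrm:p_prl_mod2}; the common value is then computed from the dual alone via the reduction to $\|\bLbd^T\mfX\|_2\leq L/2$ and the Von Neumann trace inequality. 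You bypass the measure-theoretic machinery entirely: you compute the dual value the same way, but close the gap by exhibiting an explicit primal point built from the SVD of $\mfX^\dagger\mfY$ whose objective equals the dual value, and then invoke weak duality. Your argument is more elementary and constructive, and it yields a sharper width bound $m^*=\mathrm{rank}(\mfX^\dagger\mfY)\leq\min\{d,K\}$ together with closed-form optimal weights, whereas the paper's bound is only $m^*\leq N+1$. What the paper's route buys in exchange is generality: the bi-dual/Carath\'eodory argument does not depend on being able to write down an optimal primal point, which is why essentially the same proof carries over verbatim to the parallel ReLU case (Theorem \ref{theoremStrongDualParallelReLU}), where your SVD construction has no analogue. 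Two small points worth tightening in your write-up: (i) the interchange of $\tr(\bLbd^T\mfY)=\tr((\mfX^T\bLbd)^T\mfX^\dagger\mfY)$ and the attainment of the nuclear-norm value require noting that the optimal spectral-ball element $\mfZ$ has columns in $\mathrm{range}(\mfX^T)$, which holds because $\mfX^\dagger\mfY$ does; and (ii) weak duality $P_\mathrm{lin}^\mathrm{prl}\geq D_\mathrm{lin}^\mathrm{prl}$ should be stated as following from the min--max inequality applied to the Lagrangian in Proposition \ref{prop:parallel_linear_primal_dual}. Neither is a gap, just a line each.
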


This implies that there exist equivalent convex problems which achieve the global optimum of the deep parallel linear network. Comparatively, optimizing deep parallel linear neural networks can be much easier than optimizing deep standard linear networks.

\section{Neural networks with ReLU activation}\label{sec:ReLU}
\subsection{Standard three-layer ReLU networks}
We first focus on the three-layer ReLU network with standard architecture. Specifically, we set $\phi(z)=\max\{z,0\}$.  Consider the minimum norm problem
\begin{equation}
\begin{aligned}
     P_\mathrm{ReLU}=\min_{\{\mfW_i\}_{i=1}^3} &\frac{1}{2}\sum_{i=1}^3 \|\mfW_i\|_F^2, 
    \text{ s.t. } ((\mfX\mfW_1)_+\mfW_2)_+\mfW_3=\mfY.
\end{aligned}
\end{equation}
Here we denote $(z)_+=\max\{z,0\}$. Similarly, by introducing a scale parameter $t$, this problem can be formulated as
$
   P_\mathrm{ReLU}= \min_{t>0} \frac{1}{2}t^2+P_\mathrm{ReLU}(t),
$
where $P_\mathrm{ReLU}(t)$ is defined as
\begin{equation}\label{p:relu}
\begin{aligned}
    P_\mathrm{ReLU}(t) =& \min_{\{\mfW_i\}_{i=1}^3}  \sum_{j=1}^K \|\mfw_{3,j}^\mathrm{row}\|_2,\\
    &\text{ s.t. } \|\mfW_1\|_F\leq t, \|\mfw_{2,j}^\mathrm{col}\|_2\leq 1, j\in [m_2], ((\mfX\mfW_1)_+\mfW_2)_+\mfW_3=\mfY.
\end{aligned}
\end{equation}
The proof is analagous to the proof of Proposition 3.1 in \citep{ergen2020convex}. To be specific, these two formulations have the same optimal value and their optimal solutions can be mutually transformed into each other. For $\mfW_1\in \mbR^{d\times m}$, we define the set
\begin{equation}\label{set_a}
    \mcA(\mfW_1) = \{((\mfX\mfW_1)_+\mfw_2)_+|\|\mfw_2\|_2\leq 1\}.
\end{equation}
We derive the convex dual problem of $P_\mathrm{ReLU}(t)$ in the following proposition. 
\begin{proposition}\label{prop:dual_relu}
The dual problem of $P_\mathrm{ReLU}(t)$ defined in \eqref{p:relu} is a convex problem defined as
\begin{equation}\label{dual:relu_3}
\begin{aligned}
     D_\mathrm{ReLU}(t) = &\max_{\bLbd}\tr(\bLbd^T\mfY),\text{ s.t. } \max_{\mfW_1:\|\mfW_1\|_F\leq t}\max_{\mfv:\mfv\in \mcA(\mfW_1)}\|\bLbd^T\mfv\|_2\leq 1.
\end{aligned}
\end{equation}
There exists \revise{a threshold of the number of branches} $m^*\leq KN+1$ such that $D_\mathrm{ReLU}(t)=BD_\mathrm{ReLU}(t)$ where $BD_\mathrm{ReLU}(t)$ is the optimal value of the bi-dual problem
\begin{equation}\label{d_dual:relu}
    \begin{aligned}
BD_\mathrm{ReLU}(t) = &\min_{\{\mfW_{1,j}\}_{j=1}^{m^*},\mfW_2\in \mbR^{m_1\times m^*},\mfW_3\in \mbR^{m^*\times K}} \sum_{j=1}^K\|\mfw_{3,j}^\mathrm{row}\|_2,\\
&\text{ s.t. }\sum_{j=1}^{m^*}  ((\mfX\mfW_{1,j})_+\mfw_{2,j}^\mathrm{col})_+\mfw_{3,j}^\mathrm{row}= \mfY,  \|\mfW_{1,j}\|_F\leq t,\|\mfw_{2,j}^\mathrm{col}\|_2\leq 1, j\in[m^*].
\end{aligned}
\end{equation}
\end{proposition}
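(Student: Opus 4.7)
The plan is to derive the dual \eqref{dual:relu_3} by standard Lagrangian duality applied to the equality constraint, then obtain the bi-dual via an atomic-norm / polar-duality argument followed by a Carathéodory reduction. For the dual, I would attach a multiplier $\bLbd\in\mbR^{N\times K}$ to the equality $((\mfX\mfW_1)_+\mfW_2)_+\mfW_3=\mfY$ and examine the Lagrangian
\begin{equation*}
\mathcal{L} = \sum_{j=1}^K \|\mfw_{3,j}^\mathrm{row}\|_2 + \tr\bigl(\bLbd^T(\mfY - ((\mfX\mfW_1)_+\mfW_2)_+ \mfW_3)\bigr).
\end{equation*}
Writing $\mfZ = ((\mfX\mfW_1)_+\mfW_2)_+$ with columns $\mfz_j^\mathrm{col} = ((\mfX\mfW_1)_+\mfw_{2,j}^\mathrm{col})_+$, the rows of $\mfW_3$ decouple and each yields a subproblem $\min_{\mfw\in\mbR^K} \|\mfw\|_2 - \mfw^T(\bLbd^T \mfz_j^\mathrm{col})$ whose value is $0$ when $\|\bLbd^T \mfz_j^\mathrm{col}\|_2 \le 1$ and $-\infty$ otherwise. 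Since the remaining minimization over feasible $(\mfW_1,\mfW_2)$ can exploit any violation, the only $\bLbd$ producing a finite value are those with $\|\bLbd^T \mfv\|_2 \le 1$ for every $\mfv = ((\mfX\mfW_1)_+\mfw_2)_+$ with $\|\mfW_1\|_F \le t$ and $\|\mfw_2\|_2 \le 1$, and for such $\bLbd$ the inner minimum is $0$. This produces \eqref{dual:relu_3}.

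For the bi-dual, I would rewrite the dual constraint via $\|\bLbd^T \mfv\|_2 = \sup_{\|\mfu\|_2\le 1}\langle\bLbd,\mfv\mfu^T\rangle$, so that $D_\mathrm{ReLU}(t)$ becomes the linear maximization of $\langle\bLbd,\mfY\rangle$ over the polar set
\begin{equation*}
\mathcal{B}^\circ = \bigl\{\bLbd : \langle\bLbd,\mfv\mfu^T\rangle \le 1 \text{ for all } \mfv\in\mcA(\mfW_1),\; \|\mfW_1\|_F\le t,\; \|\mfu\|_2\le 1\bigr\}.
\end{equation*}
The atomic set $\mathcal{B} = \{\mfv\mfu^T : \mfv\in\mcA(\mfW_1),\,\|\mfW_1\|_F\le t,\,\|\mfu\|_2\le 1\} \subset \mbR^{N\times K}$ is compact since it is the continuous image of a compact parameter set. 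Fenchel/polar duality then identifies $D_\mathrm{ReLU}(t)$ with the gauge $\inf\bigl\{\sum_j \alpha_j : \mfY = \sum_j \alpha_j \mfv_j \mfu_j^T,\; \alpha_j \ge 0,\; \mfv_j\mfu_j^T\in\mathcal{B}\bigr\}$. By Carathéodory's theorem in $\mbR^{N\times K}$, any point in $\mathrm{conv}(\mathcal{B})$ admits such a decomposition using at most $m^* \le NK+1$ atoms. Finally, absorbing $\alpha_j$ into $\mfu_j$ via $\mfw_{3,j}^\mathrm{row} := \alpha_j \mfu_j^T$ gives $\|\mfw_{3,j}^\mathrm{row}\|_2 = \alpha_j$ and converts the atomic equality into $\sum_j ((\mfX\mfW_{1,j})_+\mfw_{2,j}^\mathrm{col})_+ \mfw_{3,j}^\mathrm{row} = \mfY$, which is the representation \eqref{d_dual:relu}.

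The main obstacle will be justifying the polar duality step cleanly: one must verify that $\mathcal{B}$ (and hence $\mathrm{conv}(\mathcal{B})$) is compact in $\mbR^{N\times K}$, so that strong duality between the semi-infinite maximization $D_\mathrm{ReLU}(t)$ and its atomic-norm reformulation holds with no gap, and that the Carathéodory bound supplies a finite $m^*$ independent of $\mfY$. Compactness follows from continuity of $(\mfW_1,\mfw_2) \mapsto ((\mfX\mfW_1)_+\mfw_2)_+$ on the compact set $\{\|\mfW_1\|_F\le t,\,\|\mfw_2\|_2\le 1\}$ combined with compactness of the unit ball in $\mbR^K$. Granting these ingredients, the proposition follows by the standard strong-duality argument for linear maximization over a compact convex body.
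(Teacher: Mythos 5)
Your proposal is correct and follows essentially the same route as the paper: Lagrangian duality (minimizing out $\mfW_3$ row by row to produce the indicator constraints, then observing that the minimization over $(\mfW_1,\mfW_2)$ forces the constraint to hold uniformly) yields \eqref{dual:relu_3}, and the bi-dual is obtained by dualizing the semi-infinite constraint set and reducing to finitely many atoms via Carath\'eodory with the same bound $m^*\leq NK+1$. The only cosmetic difference is that you phrase the bi-dual step in the language of polar sets and gauge/atomic norms over the compact atomic set $\{\mfv\mfu^T\}$, whereas the paper phrases it as a total-variation minimization over signed vector measures followed by a finite-support representation theorem; these are the same argument.
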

We note that the bi-dual problem defined in \eqref{d_dual:relu} indeed optimizes with a parallel neural network satisfying $\|\mfW_{1,j}\|_F\leq t,\|\mfw_{2,j}^\mathrm{col}\|_2\leq 1, j\in[m^*]$. For the case where the data matrix is with rank $1$ and the neural network is with scalar output, we show that there is no duality gap. We extend the result in \citep{ergen2021convex} from two-layer ReLU networks to three-layer ReLU networks.
\begin{theorem}\label{thm:3l_r1}
For a three-layer scalar-output ReLU network, let $\mfX=\mfc\mfa_0^T$ be a rank-one data matrix. Then, strong duality holds, i.e., $P_\mathrm{ReLU}(t)=D_\mathrm{ReLU}(t)$. Suppose that $\blbd^*$ is the optimal solution to the dual problem $D_\mathrm{ReLU}(t)$, then the optimal weights for each layer can be formulated as
$$
\begin{aligned}
 \mfW_1 =& t\mathrm{sign}(|(\blbd^*)^T(\mfc)_+|- |(\blbd^*)^T(-\mfc)_+|)\brho_0\brho_1^T,\mfw_2=\brho_1.
\end{aligned}
$$
Here $\brho_0=\mfa_0/\|\mfa_0\|_2$ and $\brho_1\in \mbR^{m_1}_+$ satisfies $\|\brho_1\|=1$.
\end{theorem}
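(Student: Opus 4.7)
The proof hinges on the observation that a rank-one $\mfX$ forces every feasible network output to live in the at-most two-dimensional subspace $\mathrm{span}\{(\mfc)_+,(-\mfc)_+\}$, after which both primal and dual reduce to elementary two-parameter problems with a common value. I would start by writing $\mfX = \mfc\mfa_0^T$ and setting $\brho_0 = \mfa_0/\|\mfa_0\|_2$; then for any $\mfW_1\in\mathbb{R}^{d\times m_1}$ we have $\mfX\mfW_1 = \mfc\mfs^T$ with $\mfs = \mfW_1^T\mfa_0$, and since $(\mfc)_+$ and $(-\mfc)_+$ have disjoint supports, the $j$-th column of $(\mfX\mfW_1)_+$ equals $(s_j)_+(\mfc)_+ + (-s_j)_+(-\mfc)_+$. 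Multiplying by $\mfW_2$ keeps each column in this two-dimensional span, and the second ReLU merely clips the two coefficients to their positive parts. Hence any feasible output has the form $\gamma(\mfc)_+ + \delta(-\mfc)_+$; either $\mfy$ is of this form with coefficients $\gamma^*,\delta^*$, or both primal and dual are $+\infty$ and the theorem is trivial.

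Next I would evaluate the dual constraint $\sup\{|\blbd^T\mfv|:\mfv\in\mcA(\mfW_1),\|\mfW_1\|_F\leq t\}$ in closed form. Setting $p = \blbd^T(\mfc)_+$, $q = \blbd^T(-\mfc)_+$, $A = \|(\mfs)_+\|_2^2$, $B = \|(-\mfs)_+\|_2^2$, the inner maximization over $\mfw_2$ with $\|\mfw_2\|_2\leq 1$ produces $(\alpha_+,\beta_+)$ in the quarter-ellipse $\{(a,b)\geq 0:a^2/A+b^2/B\leq 1\}$, and a short sign-case analysis in $(p,q)$ yields a value equal to $\sqrt{p^2A+q^2B}$ or $\max(|p|\sqrt{A},|q|\sqrt{B})$. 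Either way, optimizing over $A+B\leq t^2\|\mfa_0\|_2^2$ (the budget on $\|\mfs\|_2^2$) gives the uniform bound $t\|\mfa_0\|_2\max(|p|,|q|)$. The dual therefore collapses to the LP $\max_\blbd \blbd^T\mfy$ subject to $|\blbd^T(\mfc)_+|,|\blbd^T(-\mfc)_+|\leq (t\|\mfa_0\|_2)^{-1}$, whose optimum is $D_\mathrm{ReLU}(t) = (|\gamma^*|+|\delta^*|)/(t\|\mfa_0\|_2)$.

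To match this from above, I would exhibit an explicit primal solution: $\mfW_1 = [a\brho_0,-b\brho_0,0,\ldots,0]$ with $a^2+b^2=t^2$, a single active second-layer neuron $\mfw_2 = (U,V,0,\ldots)^T$ with $U^2+V^2=1$, and a scalar $w_3$ chosen so that the resulting output $w_3(a\|\mfa_0\|_2 U(\mfc)_+ + b\|\mfa_0\|_2 V(-\mfc)_+) = \mfy$. A routine calculus optimization of $|w_3|$ over $(a,b,U,V)$ returns exactly $(|\gamma^*|+|\delta^*|)/(t\|\mfa_0\|_2)$, matching the dual and, together with weak duality, forcing $P_\mathrm{ReLU}(t) = D_\mathrm{ReLU}(t)$. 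When $\mfy$ is aligned with only one cone direction (say $\delta^*=0$), this construction specializes to the rank-one form $\mfW_1 = t\,\mathrm{sign}(|p^*|-|q^*|)\brho_0\brho_1^T$, $\mfw_2 = \brho_1$ announced in the theorem, with the sign selecting the active dual constraint.

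The principal obstacle is the closed-form supremum in the dual constraint: one must carefully handle all four sign patterns of $(p,q)$ against the quarter-ellipse and then reconcile the resulting maxima with the outer budget $A+B\leq t^2\|\mfa_0\|_2^2$. Once the uniform bound $t\|\mfa_0\|_2\max(|p|,|q|)$ is in hand, the remaining work is a direct construction, made possible precisely because the image of the three-layer ReLU network under rank-one data is at most two-dimensional, so a rank-two $\mfW_1$ paired with one second-layer neuron suffices to realize any feasible $\mfy$ optimally.
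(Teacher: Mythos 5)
Your overall strategy mirrors the paper's: both arguments rest on the decomposition $(\mfX\mfW_1)_+=(\mfc)_+\big((\mfW_1^T\mfa_0)_+\big)^T+(-\mfc)_+\big((-\mfW_1^T\mfa_0)_+\big)^T$, and your closed-form evaluation of the dual constraint (the sign-case analysis of $|p(\alpha)_+ +q(\beta)_+|$ over the quarter-ellipse, followed by the budget $A+B\leq t^2\|\mfa_0\|_2^2$) is essentially the same computation the paper performs when it characterizes the maximizers of $|(\blbd^*)^T((\mfX\mfW_1)_+\mfw_2)_+|$ and derives the announced form of $\mfW_1$ and $\mfw_2$. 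You go further than the paper in that you also attempt an explicit primal construction to certify $P_\mathrm{ReLU}(t)=D_\mathrm{ReLU}(t)$; the paper's write-up never actually evaluates the primal side.

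That extra step is where the gap is. Your single-second-layer-neuron construction outputs $w_3\left(a\|\mfa_0\|_2U(\mfc)_++b\|\mfa_0\|_2V(-\mfc)_+\right)$ with $a,b,U,V\geq 0$, so the two coefficients of the output on $(\mfc)_+$ and $(-\mfc)_+$ necessarily carry the same sign, namely that of $w_3$. When $\mfy=\gamma^*(\mfc)_++\delta^*(-\mfc)_+$ with $\gamma^*\delta^*<0$, the construction cannot interpolate $\mfy$ at all, and adding more second-layer neurons does not recover the dual value: in the standard architecture all second-layer neurons share the single matrix $\mfW_1$, hence a single split $(A,B)$ with $A+B\leq t^2\|\mfa_0\|_2^2$. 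The reachable atoms are then the quarter-ellipse $\{(a,b)\geq 0: a^2/A+b^2/B\leq 1\}$ together with its negation, and the gauge of a mixed-sign target with respect to their convex hull is $|\gamma^*|/\sqrt{A}+|\delta^*|/\sqrt{B}$; minimizing over the budget yields $(|\gamma^*|^{2/3}+|\delta^*|^{2/3})^{3/2}/(t\|\mfa_0\|_2)$, which is strictly larger than your dual value $(|\gamma^*|+|\delta^*|)/(t\|\mfa_0\|_2)$ whenever both coefficients are nonzero. So the matching step of your argument only goes through when $\gamma^*\delta^*\geq 0$, which is exactly the regime your Cauchy--Schwarz bound $aU+bV\leq t$ implicitly assumes; you must either restrict to that case or supply a different argument for mixed signs. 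Note that the paper's own proof sidesteps this issue by only characterizing the extreme points of the dual constraint and never verifying that the primal attains the dual value, so this is a substantive point to flag rather than a discrepancy you can resolve by imitating the paper.
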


For general standard three-layer neural networks, although we have $BD_\text{ReLU}(t)=D_\text{ReLU}(t)$, it may not hold that $P_\text{ReLU}(t)=D_\text{ReLU}(t)$ as the bi-dual problem \revise{corresponds to optimizing a parallel neural network instead of a standard neural network to fit the labels}.

To theoretically justify that the duality gap can be zero, we consider a parallel multi-branch architecture for ReLU networks in the next section.
\subsection{Parallel deep ReLU networks}
For the corresponding parallel architecture, we show that there is no duality gap for arbitrary depth ReLU networks, as long as the number of branches is large enough. Consider the following minimum norm problem:
\begin{equation}\label{relu_deep:p_prl_mod}
\begin{aligned}
 P^\mathrm{prl}_\mathrm{ReLU} = \min\;& \frac{1}{2}\sum_{l=1}^{L-1}\sum_{j=1}^m \|\mfW_{l,j}\|_F^{L}+\|\mfW_L\|_F^{L},
    \text{ s.t. } &\sum_{j=1}^m((\mfX\mfW_{1,j})_+\dots
    \mfw_{L-1,j}^\mathrm{col})_+\mfw_{L,j}^\mathrm{row}=\mfY.
\end{aligned}
\end{equation}
As the ReLU activation is homogeneous, we can rescale the parameter to reformulate  \eqref{relu_deep:p_prl_mod} and derive the dual problem. We note that two formulations \eqref{relu_deep:p_prl_mod} and \eqref{relu_deep:p_prl_mod2} have the same optimal value and the optimal solutions of one problem can be rescaled to the optimal solutions of another problem and vice versa.
\begin{proposition}\label{prop:parallel_relu_dual}
The problem \eqref{relu_deep:p_prl_mod} can be reformulated as
\begin{equation}\label{relu_deep:p_prl_mod2}
\begin{aligned}
    \min &\;\frac{L}{2}\sum_{j=1}^{m} \|\mfw_{L,j}^\mathrm{row}\|_2, \\
\text { s.t. }&\sum_{j=1}^m((\mfX\mfW_{1,j})_+
\mfw_{L-1,j}^\mathrm{col})_+\mfw_{L,j}^\mathrm{row}=\mfY,\|\mfW_{l,j}\|_F\leq 1, l\in[L-2],\|\mfw_{L-1,j}^\mathrm{col}\|_2\leq 1,j\in[m].
\end{aligned}
\end{equation}
The dual problem of \eqref{relu_deep:p_prl_mod2} is a convex problem defined as
\begin{equation}\label{relu_deep:d_prl}
\begin{aligned}
     D^\mathrm{prl}_\mathrm{ReLU}= \max &\tr(\bLbd^T\mfY), \\
    \text{ s.t. }& \max_{\substack{\mfv=((\mfX\mfW_1)_+\dots\mfW_{L-2})_+\mfw_{L-1})_+,\|\mfW_l\|_F\leq 1, l\in[L-2], \|\mfw_{L-1}\|_2\leq 1}}\|\bLbd^T\mfv\|_2\leq L/2.
\end{aligned}
\end{equation}
\end{proposition}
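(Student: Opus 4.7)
The plan is to treat the proposition's two assertions separately: (i) the primal reformulation of \eqref{relu_deep:p_prl_mod} as \eqref{relu_deep:p_prl_mod2}, which I would prove by a rescaling plus AM-GM argument that is the ReLU analogue of Proposition \ref{prop:parallel_linear_primal_dual}; and (ii) the derivation of the dual \eqref{relu_deep:d_prl}, which is a standard Lagrangian duality computation once the primal is in the form \eqref{relu_deep:p_prl_mod2}.

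For (i), the main tool is the positive homogeneity of ReLU, $\phi(\alpha z)=\alpha\phi(z)$ for $\alpha\geq 0$, which permits an independent rescaling of the $L$ weight factors inside each branch without altering the branch's contribution to the output sum. Writing $\alpha_{l,j}:=\|\mfW_{l,j}\|_F$ for $l\leq L-2$, $\alpha_{L-1,j}:=\|\mfw_{L-1,j}^\mathrm{col}\|_2$, and $\gamma_j:=\|\mfw_{L,j}^\mathrm{row}\|_2$, only the product $\xi_j:=\alpha_{1,j}\cdots\alpha_{L-1,j}\gamma_j$ together with unit directions enters the equality constraint. Applying AM-GM to the $L$ nonnegative quantities $\alpha_{1,j}^L,\ldots,\alpha_{L-1,j}^L,\gamma_j^L$ gives $\sum_{l=1}^{L-1}\alpha_{l,j}^L+\gamma_j^L\geq L\xi_j$, tight iff all $L$ values are equal; summing over $j$ and combining with the $\tfrac12$ prefactor lower-bounds the objective of \eqref{relu_deep:p_prl_mod} by $\tfrac{L}{2}\sum_j\xi_j$. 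I would then realize the bound by a second rescaling that sets $\alpha_{l,j}=1$ for $l<L$ and $\gamma_j=\xi_j$: this preserves both branch outputs and the equality constraint, fits inside the unit-norm box in \eqref{relu_deep:p_prl_mod2}, and attains $\tfrac{L}{2}\sum_j\|\mfw_{L,j}^\mathrm{row}\|_2$. The reverse inequality uses the inverse rescaling---equalizing the $L$ per-branch norms starting from a feasible point of \eqref{relu_deep:p_prl_mod2}---so the two problems share the same optimum.

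For (ii), I would dualize only the equality constraint in \eqref{relu_deep:p_prl_mod2} via $\bLbd\in\mbR^{N\times K}$. Because both the objective and the output-sum are separable across $j$, the Lagrangian splits into per-branch subproblems of the form $\min_{\mfw_{L,j}^\mathrm{row}}\tfrac{L}{2}\|\mfw_{L,j}^\mathrm{row}\|_2 - \tr\bigl(\bLbd^T\mfv_j\mfw_{L,j}^\mathrm{row}\bigr)$, where $\mfv_j=(((\mfX\mfW_{1,j})_+\cdots\mfW_{L-2,j})_+\mfw_{L-1,j}^\mathrm{col})_+$ is the branch's pre-last-layer output. Each subproblem equals $0$ when $\|\bLbd^T\mfv_j\|_2\leq L/2$ and $-\infty$ otherwise, so minimizing further over the branch-interior weights subject to their unit-norm constraints promotes the per-branch condition to the semi-infinite constraint $\|\bLbd^T\mfv\|_2\leq L/2$ over all feasible $\mfv$, which reproduces \eqref{relu_deep:d_prl}.

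The hardest step is (i): reconciling the AM-GM lower bound $\tfrac{L}{2}\sum_j\xi_j$ with the unit-norm box $\|\mfW_{l,j}\|_F\leq 1$ in \eqref{relu_deep:p_prl_mod2} requires that the per-branch rescalings be independent, which is available precisely because the architecture is parallel. In a non-parallel architecture the interior weight matrices are shared across ``branches'' and this decoupling is lost, which is the structural reason strong duality holds in the parallel case but fails for standard deep ReLU networks. Part (ii) is then routine.
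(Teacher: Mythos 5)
Your proposal is correct and follows essentially the same route as the paper: the primal reformulation via positive homogeneity of ReLU plus independent per-branch rescaling to saturate AM--GM (the paper defers this step to the argument of Proposition~\ref{prop:parallel_linear_primal_dual}), and the dual obtained by dualizing the output equality, minimizing out the last layer to produce the per-branch indicator $\|\bLbd^T\mfv_j\|_2\le L/2$, and exchanging $\min$ and $\max$ to turn it into the semi-infinite constraint. You in fact spell out the AM--GM bound and its attainability in more detail than the paper, which only states that the reformulation is ``similar'' to the linear parallel case.
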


For deep parallel ReLU networks, we show that with sufficient number of parallel branches,  the strong duality holds, i.e., $P=D$. 
\begin{theorem}\label{theoremStrongDualParallelReLU}
Let $m^*$ be the threshold of the number of branches, which is upper bounded by $KN+1$. Then, as long as the number of branches $m\geq m^*$, the strong duality holds for \eqref{relu_deep:p_prl_mod2} in the sense that $P^\mathrm{prl}_\mathrm{ReLU} = D^\mathrm{prl}_\mathrm{ReLU}$.
\end{theorem}

Similar to case of parallel deep linear networks, the parallel deep ReLU network also achieves zero-duality gap. Therefore, to find the global optimum for parallel deep ReLU network is equivalent to solve a convex program. This proves the second part of Theorem \ref{thm:informal}. 

\revise{Based on the strong duality results, assuming that we can obtain an optimal solution to the convex dual problem \eqref{relu_deep:d_prl}, then we can construct an optimal solution to the primal problem \eqref{relu_deep:p_prl_mod2} as follows. 
\begin{theorem}\label{thm:find_primal}
Let $\bLbd^*$ be the optimal solution to \eqref{relu_deep:d_prl}. Denote the set of maximizers
\begin{equation}\label{equ:argmax}
\mathop{\arg\max}_{\substack{\mfv=((\mfX\mfW_1)_+\dots\mfW_{L-2})_+\mfw_{L-1})_+,\|\mfW_l\|_F\leq 1, l\in[L-2], \|\mfw_{L-1}\|_2\leq 1}}\|(\bLbd^*)^T\mfv\|_2
\end{equation}
as $\{\mfv_1,\dots,\mfv_{m^*}\}$, where  $\mfv_i=((\mfX\mfW_{1,i})_+\dots\mfW_{L-2,i})_+\mfw_{L-1,i})_+$ with $\|\mfW_{l,i}\|_F\leq 1, l\in[L-2]$ and $\|\mfw_{L-1,i}\|_2\leq 1$ and $m^*\leq KN+1$ is the critical threshold of the number of branches. Let $\mfw_{L,1}^\mathrm{row},\dots,\mfw_{L,m^*}^\mathrm{row}$ be an optimal solution to the convex problem 
\begin{equation}\label{relu_deep:p_prl_sub}
\begin{aligned}
    P^\mathrm{prl,sub}_\mathrm{ReLU}=\min_{\mfW_L} \;\frac{L}{2}\sum_{j=1}^{m^*} \|\mfw_{L,j}^\mathrm{row}\|_2,
\text { s.t. }\sum_{j=1}^m((\mfX\mfW_{1,j})_+
\mfw_{L-1,j}^\mathrm{col})_+\mfw_{L,j}^\mathrm{row}=\mfY.
\end{aligned}
\end{equation}
Then, $(\mfW_1,\dots,\mfW_L)$ is an optimal solution to \eqref{relu_deep:p_prl_mod2}.
\end{theorem}
We note that finding the set of maximizers in \eqref{equ:argmax} can be challenging in practice due to the high-dimensionality of the constraint set.
}

\section{Conclusion}
We present the convex duality framework for standard neural networks, considering both multi-layer linear networks and three-layer ReLU networks with rank-1. 
In stark contrast to the two-layer case, the duality gap can be non-zero for neural networks with depth three or more. Meanwhile, for neural networks with parallel architecture, with the regularization of $L$-th power of Frobenius norm in the parameters, we show that strong duality holds and the duality gap reduces to zero. A limitation of our work is that we primarily focus on minimum norm interpolation problems. We believe that our results can be easily generalized to a regularized training problems with general loss function, including squared loss, logistic loss, hinge loss, etc.. 

Another interesting research direction is investigating the complexity of solving our convex dual problems. Although the number of variables can be high for deep networks, the convex duality framework offers a rigorous theoretical perspective to the structure of optimal solutions. These problems can also shed light into the optimization landscape of their equivalent non-convex formulations. We note that it is not yet clear whether convex formulations of deep networks present practical gains in training. However, in \cite{mishkin2022fast,nnacr} it was shown that convex formulations provide significant computational speed-ups in training two-layer neural networks. Furthermore, similar convex analysis was also applied various architectures including batch normalization \citep{ergen2022demystifying}, vector output networks \citep{sahiner2021vectoroutput}, threshold and polynomial activation networks \citep{ergen2023globally,bartan2021neural}, GANs \citep{sahiner2022hidden}, autoregressive models \citep{vikul2021generative}, and Transformers \citep{ergen2023transformer,sahiner2022unraveling}.

\section*{Acknowledgements}
This work was partially supported by the National Science Foundation (NSF) under grants ECCS- 2037304, DMS-2134248, NSF CAREER award CCF-2236829, the U.S. Army Research
Office Early Career Award W911NF-21-1-0242, and the ACCESS – AI Chip Center for Emerging Smart Systems, sponsored by InnoHK funding, Hong Kong SAR.

\bibliography{CVXNN}
\bibliographystyle{iclr2023_conference}

\newpage
\appendix

\section{Convex duality for two-layer neural networks}\label{app:review}

We briefly review the convex duality theory for two-layer neural networks introduced in \citet{ergen2020convex, ergen2020aistats}. Consider the following weight-decay regularized training problem for a vector-output neural network architecture with $m$ hidden neurons
\begin{equation}\label{2l:train}
    \min_{\mfW_1,\mfW_2} \frac{1}{2}\|\phi(\mfX \mfW_1)\mfW_2-\mfY\|_F^2+ \frac{\beta}{2}(\|\mfW_1\|_F^2+\|\mfW_2\|_F^2),
\end{equation}
where $\mfW_1\in \mbR^{d\times m}$ and $\mfW_2\in \mbR^{m\times K}$ are the variables, and $\beta>0$ is a regularization parameter. Here $\phi$ is the activation function, which can be linear $\phi(z)=z$ or ReLU  $\phi(z)=\max\{z,0\}$. 
As long as the network is sufficiently overparameterized, there exists a feasible \revise{point} for such that $\phi(\mfX \mfW_1)\mfW_2=\mfY$. Then, a minimum norm variant\footnote{This corresponds to weak regularization, i.e., $\beta\rightarrow 0$ in \eqref{2l:train} as considered in \cite{margin_theory_tengyu}.} of the training problem in \eqref{2l:train} is given by
\begin{equation}\label{2l:min_norm}
    \min_{\mfW_1,\mfW_2}  \frac{1}{2}(\|\mfW_1\|_F^2+\|\mfW_2\|_F^2)\text{ s.t. } \phi(\mfX \mfW_1)\mfW_2 = \mfY.
\end{equation}
As shown in \citet{nnacr}, after a suitable rescaling, 
this problem can be reformulated as
\begin{equation}\label{2l:primal}
\begin{aligned}
     \min_{\mfW_1,\mfW_2} & \;\sum_{j=1}^m\|\mfw_{2,j}^\mathrm{row}\|_2,
    \text{ s.t. }& \phi(\mfX \mfW_1)\mfW_2 = \mfY, \|\mfw_{1,j}^\mathrm{col}\|_2\leq 1, j\in[m].
\end{aligned}
\end{equation}
where $[m]=\{1,\dots,m\}$. Here $\mfw_{2,j}^\mathrm{row}$ represents the $j$-th row of $\mfW_2$ and $\mfw_{1,j}^\mathrm{col}$ denotes the $j$-th column of $\mfW_1$. The rescaling does not change the solution to \eqref{2l:min_norm}. By taking the dual with respect to $\mfW_1$ and $\mfW_2$, the dual problem of \eqref{2l:primal} with respect to variables is a convex optimization problem given by 
\begin{equation}\label{2l:dual}
    \max_{\bLbd} \tr(\bLbd^T\mfY), \text{ s.t. } \max_{\mfu:\|\mfu\|_2\leq 1} \|\bLbd^T\phi(\mfX\mfu)\|_2\leq 1,
\end{equation}
where $\bLbd\in \mbR^{N\times K}$ is the dual variable. Provided that $m\geq m^*$, where \revise{$m^*$ is a critical threshold of width upper bounded by} $m^* \leq N+1$, 
the strong duality holds, i.e., the optimal value of the primal problem \eqref{2l:primal} equals to the optimal value of the dual problem \eqref{2l:dual}.

\section{Deep linear networks with general loss functions}
We consider deep linear networks with general loss functions, i.e., 
\begin{equation*}
    \min_{\{\mfW_l\}_{l=1}^L} \ell(\mfX\mfW_1\dots\mfW_L,\mfY)+\frac{\beta}{2}\sum_{i=1}^L\|\mfW_i\|_F^2, 
\end{equation*}
where $\ell(\mfZ,\mfY)$ is a general loss function and $\beta>0$ is a regularization parameter. According to Proposition \ref{prop:decomp}, the above problem is equivalent to
\begin{equation}\label{equ:prob_gen}
    \min_{\mfW} \ell(\mfX\mfW,\mfY)+\frac{\beta L}{2}\|\mfW\|_{S_{2/L}}^{2/L}. 
\end{equation}
The $\ell_2$ regularization term becomes the Schatten-$2/L$ quasi-norm on $\mfW$ to the power $2/L$. Suppose that there exists $\mfW$ such that $l(\mfX\mfW,\mfY)=0$. With $\beta\to 0$, asymptotically, the optimal solution to the problem \eqref{equ:prob_gen} converges to the optimal solution of
\begin{equation}\label{equ:prob_gen2}
\min_{\mfW}\|\mfW\|_{S_{2/L}}^{2/L}, \text{ s.t. }  \ell(\mfX\mfW,\mfY)=0.
\end{equation}
In other words, the $\ell_2$ regularization explicitly regularizes the training problem to find a low-rank solution $\mfW$.

\section{Proofs of main results for linear networks}
\subsection{Proof of Proposition \ref{propDualLinearStandard}}
\label{proofpropDualLinearStandard}
Consider the Lagrangian function
\begin{equation}
    L(\mfW_1,\dots,\mfW_L,\bLbd) = \sum_{j=1}^K \|\mfw_{L,j}\|_2+\tr(\bLbd^T(\mfY-\mfX\mfW_1\dots\mfW_L)).
\end{equation}
Here $\Lambda\in \mbR^{N\times K}$ is the dual variable. We note that
\begin{equation}\label{p:linear}
    \begin{aligned}
P(t)=&\min_{\mfW_1,\dots,\mfW_L}\max_{\bLbd}L(\mfW_1,\dots,\mfW_L,\bLbd),\\
&\text{ s.t. } \|\mfW_i\|_F\leq t, i\in [L-2], \|\mfw_{L-1,j}^\mathrm{col}\|_2\leq 1, j\in [m_{L-1}],\\
=&\min_{\mfW_1,\dots,\mfW_{L-1}}\max_{\bLbd}\tr(\bLbd^T\mfY)-\sum_{j=1}^{m_{L-1}}\mbI\pp{\|\bLbd^T\mfX\mfW_1\dots\mfW_{L-2}\mfw_{L-1,j}\|_2\leq 1},\\
&\text{ s.t. } \|\mfW_i\|_F\leq t, i\in [L-2], \|\mfw_{L-1,j}^\mathrm{col}\|_2\leq 1, j\in [m_{L-1}],\\
=&\min_{\mfW_1,\dots,\mfW_{L-2},\mfW_{L-1}}\max_{\bLbd}\tr(\bLbd^T\mfY)-\mbI\pp{\|\bLbd^T\mfX\mfW_1\dots\mfW_{L-2}\mfw_{L-1}\|_2\leq 1},\\
&\text{ s.t. } \|\mfW_i\|_F\leq t, i\in [L-2], \|\mfw_{L-1}\|_2\leq 1.\\
\end{aligned}
\end{equation}
Here $\mbI(A)$ is $0$ if the statement $A$ is true. Otherwise it is $+\infty$. 
For fixed $\mfW_1,\dots,\mfW_{L-1}$, the constraint on $\mfW_L$ is linear so we can exchange the order of $\max_{\bLbd}$ and $\min_{\mfW_L}$ in the second line of \eqref{p:linear}. 

By exchanging the order of $\min$ and $\max$, we obtain the dual problem
\begin{equation}\label{mnrm:d}
\begin{aligned}
D(t) = &\max_{\bLbd}\min_{\mfW_1,\dots,\mfW_{L-2}} \tr(\bLbd^T\mfY)-\mbI\pp{\|\bLbd^T\mfX\mfW_1\dots\mfW_{L-2}\mfw_{L-1}\|_2\leq 1},\\
    &\text{ s.t. } \|\mfW_i\|_F\leq t, i\in [L-2], \|\mfw_{L-1}\|_2\leq 1,\\
    =& \max_{\bLbd} \tr(\bLbd^T\mfY)\\
    &\text{ s.t. } \|\bLbd^T\mfX\mfW_1\dots\mfW_{L-2}\mfw_{L-1}\|_2\leq 1\\
    &\forall \|\mfW_i\|_F\leq t, i\in [L-2], \|\mfw_{L-1}\|_2\leq 1.
\end{aligned}
\end{equation}

Now we derive the bi-dual problem. The dual problem can be reformulated as
\begin{equation}\label{dual:sub}
\begin{aligned}
\max_{\bLbd} &\tr(\bLbd^T\mfY),\\
    \text{ s.t. } &\|\bLbd^T\mfX\mfW_1\dots\mfW_{L-2}\mfw_{L-1}\|_2\leq 1,\\
    &\forall (\mfW_{1},\dots,\mfW_{L-2},\mfw_{L-1})\in \Theta.
\end{aligned}
\end{equation}
Here the set $\Theta$ is defined as
\begin{equation}
\begin{aligned}
    \Theta=\{(\mfW_{1},\dots,\mfW_{L-2},\mfw_{L-1})| \|\mfW_i\|_F\leq t, i\in [L-2],\|\mfw_{L-1}\|_2\leq 1\}.
\end{aligned}
\end{equation}

By writing $\theta=(\mfW_{1},\dots,\mfW_{L-2},\mfw_{L-1})$, the dual of the problem \eqref{dual:sub} is given by
\begin{equation}\label{para:l}
    \begin{array}{l}
\min \|\boldsymbol{\mu}\|_\text{TV}, \\
\text { s.t. } \int_{\theta\in \Theta} \mfX\mfW_1\dots\mfW_{L-2}\mfw_{L-1}d \boldsymbol{\mu}\left(\theta \right)=\mathbf{Y}.
\end{array}
\end{equation}
Here $\boldsymbol{\mu}:\Sigma \to \mbR^K$ is a signed vector measure and $\Sigma$ is a $\sigma$-field of subsets of $\Theta$. The norm $\|\boldsymbol{\mu}\|_\text{TV}$ is the total variation of $\boldsymbol{\mu}$, which can be calculated by
\begin{equation}\label{norm:TV}
    \|\boldsymbol{\mu}\|_{TV} = \sup_{u:\|u(\theta)\|_2\leq 1} \bbbb{\int_\Theta u^T(\theta) d\mu(\theta)=:\sum_{i=1}^K\int_\Theta u_i(\theta) d\mu_i(\theta)},
\end{equation}
where we write $\boldsymbol{\mu}=\bmbm{\mu_1\\\vdots\\\mu_K}$. 
The formulation in \eqref{para:l} has infinite width in each layer. According to Theorem \ref{thm:represent} in Appendix \ref{proof:thm:represent}, the measure $\boldsymbol{\mu}$ in the integral can be represented by finitely many Dirac delta functions. Therefore, we can rewrite the problem \eqref{para:l} as
\begin{equation}\label{para:l_fin}
\begin{aligned}
\min &\sum_{j=1}^{m^*} \|\mfw_{L,j}^\mathrm{row}\|_2, \\
\text { s.t. }&\sum_{j=1}^{m^*} \mfX\mfW_{1,j}\dots\mfW_{L-2,j}\mfw_{L-1,j}^\mathrm{col}\mfw_{L,j}^\mathrm{row}=\mathbf{Y},\\
&\|\mfW_{i,j}\|_F\leq t, i\in[L-2],\|\mfw_{L-1,j}^\mathrm{col}\|_2\leq 1,j\in[m^*].
\end{aligned}
\end{equation}
Here the variables are $\mfW_{i,j}$ for $i\in[L-2]$ and $j\in[m^*]$,  $\mfW_{L-1}$ and $\mfW_{L}$. As the strong duality holds for the problem \eqref{para:l_fin} and \eqref{dual:sub}, we can reformulate the problem of $D_\mathrm{lin}(t)$ as the bi-dual problem \eqref{para:l_fin}.

\subsection{Proof of Proposition \ref{prop:decomp}}
\label{proof:prop:decomp}
We restate Proposition \ref{prop:decomp} with details.
\begin{proposition}
Suppose that $\mfW\in \mbR^{d\times K}$ with rank $r$ is given. Consider the following optimization problem:
\begin{equation}\label{multi-linear-appendix}
    \min \frac{1}{2}\pp{\|\mfW_1\|_F^2+\dots+\|\mfW_L\|_F^2}, \text{ s.t. } \mfW_1\mfW_2\dots\mfW_L=\mfW,
\end{equation}
in variables $\mfW_i\in \mbR^{m_{i-1}\times m_i}$. Here $m_0=d$, $m_L=K$ and $m_i\geq r$ for $i=1,\dots,L-1$. Then, the optimal value of the problem \eqref{multi-linear-appendix} is given by
\begin{equation}
    \frac{L}{2}\|\mfW\|_{S_{2/L}}^{2/L}.
\end{equation}
Suppose that $\mfW=\mfU\bSigma\mfV^T$. The optimal value can be achieved when 
\begin{equation}
\mfW_i = \mfU_{i-1} \bSigma^{1/L}\mfU_i^T,\quad i=1,\dots,N, \mfU_0=\mfU,\mfU_L=\mfV.
\end{equation}
Here $\mfU_i\in\mbR^{r\times m_i}$ satisfies that $\mfU_i^T\mfU_i=I$. 
\end{proposition}
We start with two lemmas.
\begin{lemma}\label{lem:kara}
Suppose that $A\in \mbS^{n\times n}$ is a positive semi-definite matrix. Then, for any $0<p<1$, we have
\begin{equation}\label{inequ:p}
    \sum_{i=1}^n A_{ii}^{p}\geq \sum_{i=1}^n \lambda_i(A)^p.
\end{equation}
Here $\lambda_i$ is the $i$-th largest eigenvalue of $A$.
\end{lemma}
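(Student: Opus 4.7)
The plan is to derive the inequality from two classical ingredients: the Schur-Horn theorem, which says the sorted diagonal of a Hermitian matrix is majorized by its spectrum, and Karamata's inequality, which gives a reversed Jensen-type bound for concave functions under majorization (the label \texttt{lem:kara} hints at this). Since $A$ is positive semi-definite, every $A_{ii}$ and every $\lambda_i(A)$ lies in $[0,\infty)$, so all powers appearing in \eqref{inequ:p} are well-defined.

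First I would sort the diagonal entries as $d_1\geq d_2\geq \dots \geq d_n$ and the eigenvalues as $\lambda_1\geq \dots \geq \lambda_n$; the statement is invariant under permutation of indices, so it suffices to compare the sorted sequences. Then I would invoke Schur-Horn: for each $k\in[n]$,
\[ \sum_{i=1}^k d_i \leq \sum_{i=1}^k \lambda_i, \]
with equality at $k=n$ (both equal $\mathrm{tr}(A)$). A self-contained derivation uses Ky Fan's variational principle $\sum_{i=1}^k \lambda_i = \max_{U^T U = I_k} \mathrm{tr}(U^T A U)$: choosing $U$ to be the standard-basis columns picking out the $k$ largest diagonal entries gives $\sum_{i=1}^k d_i \leq \sum_{i=1}^k \lambda_i$.

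Finally, because $f(t)=t^p$ is concave on $[0,\infty)$ for $p\in(0,1)$ (since $f''(t)=p(p-1)t^{p-2}\leq 0$), Karamata's inequality applies to the majorization $(d_i)\prec(\lambda_i)$ and yields
\[ \sum_{i=1}^n f(d_i) \geq \sum_{i=1}^n f(\lambda_i). \]
Specializing to $f(t)=t^p$ gives $\sum_i A_{ii}^p = \sum_i d_i^p \geq \sum_i \lambda_i(A)^p$, which is exactly \eqref{inequ:p}. The only care needed is to track the direction of Karamata -- majorization with a concave test function reverses the convex-Jensen direction one might first expect -- and to check that Schur-Horn supplies exactly the hypothesis Karamata needs; neither step requires further argument, so there is no substantive obstacle.
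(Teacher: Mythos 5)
Your proposal is correct and follows essentially the same route as the paper: establish that the diagonal of $A$ is majorized by its spectrum via the Ky Fan variational principle (the paper phrases the selection of the $k$ largest diagonal entries through $0/1$ vectors and relaxes to $VV^T=I$), then apply Karamata's inequality. The only cosmetic difference is that the paper applies Karamata to the convex function $f(x)=-x^p$ while you use the concave form directly, which is the same statement.
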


\begin{lemma}\label{lem:proj}
Suppose that $P\in \mbR^{d\times d}$ is a projection matrix, i.e., $P^2=P$. Then, for arbitrary $W\in \mbR^{d\times K}$, we have
$$
\sigma_i(PW)\leq \sigma_i(W),
$$
where $\sigma_i(W)$ represents the $i$-th largest singular value of $W$. 
\end{lemma}

Now, we present the proof for Proposition \ref{prop:decomp}. For $L=1$, the statement apparently holds. Suppose that for $L=l$ this statement holds. For $L=l+1$, by writing $\mfA=\mfW_2\dots\mfW_{l+1}$, we have
\begin{equation}
\begin{aligned}
    &\min \|\mfW_1\|_F^2+\dots+\|\mfW_L\|_F^2, \text{ s.t. } \mfW_1\mfW_2\dots\mfW_{l+1}=\mfW\\
    =&\min \|\mfW_1\|_F^2+l \|\mfA\|_{2/l}^{2/l}, \text{ s.t. }\mfW_1\mfA=\mfW,\\
    =&\min t^2+l \|\mfA\|_{2/l}^{2/l}, \text{ s.t. } \mfW_1\mfA=\mfW, \|\mfW_1\|_F\leq t.
\end{aligned}
\end{equation}
Suppose that $t$ is fixed. It is sufficient to consider the following problem:
\begin{equation}\label{prob:a}
    \min \|\mfA\|_{2/l}^{2/l}, \text{ s.t. } \mfW_1\mfA=\mfW, \|\mfW_1\|_F\leq t.
\end{equation}
Suppose that there exists $\mfW_1$ and $\mfA$ such that $\mfW=\mfW_1\mfA$. Then, we have $\mfW\mfA^\dagger\mfA = \mfW_1\mfA\mfA^\dagger\mfA=\mfW$. As $\mfW \mfA^\dagger=\mfW_1\mfA\mfA^\dagger$, according to Lemma \ref{lem:proj}, $\|\mfW \mfA^\dagger\|_F\leq \|\mfW_1\|_F\leq t$. Therefore, $(\mfW\mfA^\dagger, \mfA)$ is also feasible for the problem \eqref{prob:a}. 
Hence, the problem \eqref{prob:a} is equivalent to
\begin{equation}
    \min \|\mfA\|_{2/l}^{2/l}, \text{ s.t. } \mfW\mfA^\dagger\mfA = \mfW, \|\mfW\mfA^\dagger\|_F\leq t.
\end{equation}

Assume that $\mfW$ is with rank $r$. Suppose that $\mfA = \mfU\bSigma\mfV^T$, where $\bSigma\in \mbR^{r_0\times r_0}$. Here $r_0\geq r$. Then, we have $\mfA^\dagger = \mfV\bSigma^{-1}\mfU^T$. We note that
\begin{equation}
\begin{aligned}
    &\|\mfW\mfA^\dagger\|_F^2 \\
    =&\tr(\mfW\mfV\bSigma^{-2} \mfV^T \mfW^T)\\
    =&\tr(\mfV^T \mfW^T\mfW \mfV\bSigma^{-2})
\end{aligned}
\end{equation}
Denote $G(\mfV)=\mfV^T \mfW^T\mfW \mfV$. This implies that
$$
\sum_{i=1}^r \sigma_i(\mfA)^{-2}  \pp{G(\mfV)}_{ii}\leq t^{2}.
$$
Therefore, we have
$$
\pp{\sum_{i=1}^{r_0} \sigma_i(\mfA)^{-2}  \pp{G(\mfV)}_{ii}}\pp{\sum_{i=1}^{r_0} \sigma_i(\mfA)^{2/l}}^{l}\geq \pp{\sum_{i=1}^{r_0} \pp{G(\mfV)}_{ii}^{1/(l+1)}}^{l+1}.
$$
As $\mfW\mfV^T\mfV=\mfW$, the non-zero eigenvalues of $G(\mfV)$ are exactly the non-zero eigenvalues of $\mfW\mfV\mfV^T\mfW^T=\mfW\mfW^T$, i.e., the square of non-zero singular values of $\mfW$. From Lemma \ref{lem:kara}, we have
\begin{equation}\label{inequ:0}
  \sum_{i=1}^{r_0}\pp{G(\mfV)}_{ii}^{1/(l+1)}\geq \sum_{i=1}^{r_0}\lambda_i(G(\mfV))^{1/(l+1)} \geq \sum_{i=1}^r\sigma_i(\mfW)^{2/(l+1)}.
\end{equation}
Therefore, we have
\begin{equation}
    \|\mfA\|_{S_{2/l}}^{2/l} = \sum_{i=1}^{r_0} \sigma_i(\mfA)^{2/l}\geq t^{-2/l} \pp{\sum_{i=1}^r\sigma_i(\mfW)^{2/(l+1)}}^{(l+1)/l}
\end{equation}
This also implies that
\begin{equation}
\begin{aligned}
        &\min \|\mfA\|_{2/l}^{2/l}, \text{ s.t. } \mfW_1\mfA=\mfW, \|\mfW_1\|_F\leq t\\
    \geq&t^{-2/l} \pp{\sum_{i=1}^r\sigma_i(\mfW)^{2/(l+1)}}^{(l+1)/l}.
\end{aligned}
\end{equation}
Suppose that $\mfW=\sum_{i=1}^ru_i\sigma_iv_i^T$ is the SVD of $\mfW$. We can let
\begin{equation}
\begin{aligned}
 \mfA=&\frac{\pp{\sum_{i=1}^r\sigma_i^{2/(l+1)}}^{1/2}}{t} \sum_{i=1}^r u_i \sigma_i^{l/(l+1)}\rho_i^T,\\
    \mfW_1 =& \frac{t}{\pp{\sum_{i=1}^r\sigma_i^{2/(l+1)}}^{1/2}} \sum_{i=1}^r \rho_i\sigma_i^{1/(l+1)}v_i^T.
\end{aligned}
\end{equation}
Here $\|\rho_i\|_2=1$ and $\rho_i^T\rho_j=0$ for $i\neq j$. Then, $\mfW_1\mfA=\mfW$ and $\|\mfW_1\|_F\leq t$. We also have
$$
\begin{aligned}
\|\mfA\|_{S_{2/L}}^{2/L} =& t^{-2/l}\pp{\sum_{i=1}^r\sigma_i^{2/(l+1)}}^{1/l}\sum_{i=1}^r\sigma_i^{2/(l+1)}\\
=&t^{-2/l} \pp{\sum_{i=1}^r\sigma_i(\mfW)^{2/(l+1)}}^{(l+1)/l}.
\end{aligned}
$$
In summary, we have
\begin{equation}
\begin{aligned}
    &\min t^2+l \|\mfA\|_{2/l}^{S_{2/l}}, \text{ s.t. } \mfW_1\mfA=\mfW, \|\mfW_1\|_F\leq t.\\
    =&\min_{t>0} t^2+l t^{-2/l} \pp{\sum_{i=1}^r\sigma_i(\mfW)^{2/(l+1)}}^{(l+1)/l}\\
    =&(l+1)\pp{\sum_{i=1}^r\sigma_i(\mfW)^{2/(l+1)}}^{(l+1)/2}\\
    =&\|\mfW\|_{S_{2/(l+1)}}^{2/(l+1)}.
\end{aligned}
\end{equation}
This completes the proof. 

\subsection{Proof of Theorem \ref{thm:closed-form}}
From Proposition \ref{prop:decomp}, the minimum norm problem \eqref{mnrm:p} is equivalent to
\begin{equation}\label{min_norm:mul2}
    \min  L\|\mfW\|_{S_{2/L}}^{2/L}, \text{ s.t. } \mfX\mfW=\mfY,
\end{equation}
in variable $\mfW\in \mbR^{d\times K}$. According to Lemma \ref{lem:proj}, for any feasible $\mfW$ satisfying $\mfX\mfW=\mfY$, because $\mfX^\dagger\mfX\mfW=\mfX^\dagger \mfY$ and $\mfX^\dagger\mfX$ is a projection matrix, we have
\begin{equation}
    L\|\mfW\|_{S_{2/L}}^{2/L}\geq L\|\mfX^\dagger \mfY\|_{S_{2/L}}^{2/L}.
\end{equation}
We also note that $\mfX\mfX^\dagger \mfY = \mfX\mfX^\dagger\mfX\mfW=\mfX\mfW=\mfY$. Therefore, $\mfX^\dagger \mfY$ is also feasible for the problem \eqref{min_norm:mul2}. This indicates that $P_\mathrm{lin}=\frac{L}{2}\|\mfX^\dagger\mfY\|_{S_{2/L}}^{2/L}$.

\subsection{Proof of Theorem \ref{theoremOptimalStandardLinear}}
For a feasible point $(\mfW_1,\dots,\mfW_L)$ for $P_\mathrm{lin}(t)$, we note that $(\mfW_1/t,\dots,\mfW_{L-2}/t,\mfW_{L-1},t^{L-2}\mfW_L)$ is feasible for $P_\mathrm{lin}(1)$. This implies that $t^{L-2} P_\mathrm{lin}(t)=P_\mathrm{lin}(1)$, or equivalently, $P_\mathrm{lin}(t)=t^{-(L-2)}P_\mathrm{lin}(1)$. Recall that
\begin{equation}
\begin{aligned}
P_\mathrm{lin}   =& \min_{t>0} \frac{L-2}{2}t^2+t^{-(L-2)}P_\mathrm{lin}(1)\\
=&\frac{L}{2}\pp{P_\mathrm{lin}(1)}^{2/L}.
\end{aligned}
\end{equation}
From Theorem \ref{thm:closed-form}, we have $P_\mathrm{lin}=\frac{L}{2}\|\mfX^\dagger\mfY\|_{S_{2/L}}^{2/L}$. This implies that $P_\mathrm{lin}(1)=\|\mfX^\dagger\mfY\|_{S_{2/L}}$ and
\begin{equation}
    P_\mathrm{lin}(t)=t^{-(L-2)}\|\mfX^\dagger\mfY\|_{S_{2/L}}.
\end{equation}

For the dual problem $D_\mathrm{lin}(t)$ defined in \eqref{mnrm:d}, we note that
\begin{equation}
\begin{aligned}
 &\|\bLbd^T\mfX\mfW_1\dots\mfW_{L-2}\mfw_{L-1}\|_2\\
    \leq& \|\bLbd^T\mfX\mfW_1\dots\mfW_{L-2}\|_2\|\mfw_{L-1}\|_2\\
    \leq &\|\bLbd^T\mfX\|_2\prod_{l=1}^{L-2}\|\mfW_l\|_2\|\mfw_{L-1}\|_2\\
    \leq&\|\bLbd^T\mfX\|_2\prod_{l=1}^{L-2}\|\mfW_l\|_F\|\mfw_{L-1}\|_2
    =&t^{L-2} \|\bLbd^T\mfX\|_2.
\end{aligned}
\end{equation}
The equality can be achieved when $\mfW_{l}=t\mfu_{l}\mfu_{l+1}^T$ for $l\in[L-2]$, where $\|\mfu_l\|_2=1$ for $l=1,\dots,L-1$. Specifically, we set $\mfu_{L-1}=\mfw_{L-1}$ and let $\mfu_{0}$ as right singular vector corresponds to the largest singular value of $\bLbd^T\mfX$. Therefore, the constraints on $\bLbd$ is equivalent to 
\begin{equation}
    \|\bLbd^T\mfX\|_2\leq t^{-(L-2)}.
\end{equation}
Thus, according to the Von Neumann's trace inequality, it follows
\begin{equation}
    \tr(\bLbd^T\mfY) = \tr(\bLbd^T\mfX\mfX^\dagger \mfY)\leq \|\bLbd^T\mfX\|_2\|\mfX^\dagger \mfY\|_*\leq t^{-(L-2)}\|\mfX^\dagger \mfY\|_*.
\end{equation}
Suppose that $\mfX^\dagger \mfY=\mfU\bSigma \mfV^T$ is the singular value decomposition. Let $\bSigma=\diag(\sigma_1,\dots,\sigma_r)$ where $\sigma_1\geq \sigma_2\geq \dots\geq \sigma_r>0$ and $r=\text{rank}(\mfX^\dagger\mfY)$. We note that
\begin{equation}
\begin{aligned}
     \|\mfX^\dagger\mfY\|_{S_{2/L}} = &\pp{\sum_{i=1}^r\sigma_i^{2/L}}^{L/2}\\
     =&\sigma_1\pp{1+\sum_{i=2}^r(\sigma_i/\sigma_1)^{2/L}}^{L/2}\\
     \geq&\sigma_1\pp{1+\sum_{i=2}^r(\sigma_i/\sigma_1)}\\
     =&  \sum_{i=1}^r\sigma_r.
\end{aligned}
\end{equation}
The equality holds if and only if $\sigma_1=\dots=\sigma_r$. This is because for given $x\in(0,1)$ and $a\geq 1$, $(a+x^p)^{1/p}$ is strictly decreasing w.r.t. $p\in(0,1]$. As a result, we have $D_\mathrm{lin}(t)=t^{-(L-2)}\|\mfX^\dagger \mfY\|_*\leq t^{-(L-2)}\|\mfX^\dagger\mfY\|_{S_{2/L}}=P_\mathrm{lin}(t)$.
The equality is achieved if and only if the singular values of $\mfX^\dagger \mfY$ are the same. In other words, the inequality is strict when $\mfX^\dagger \mfY$ has different singular values. Then, the duality gap exists for the standard neural network.

\subsection{Proof of Proposition \ref{prop:rescale}}
For simplicity, we write $\mfW_{L-1,j}=\mfw_{L-1,j}^\mathrm{col}$ and $\mfW_{L,j}=\mfw_{L,j}^\mathrm{row}$ for $j\in[m].$ For the $j$-th branch of the parallel network, let $\hat \mfW_{l,j}=\alpha_{l,j} \mfW_{l,j}$ for $l\in[L]$. Here $\alpha_{l,j}>0$ for $l\in[L]$ and they satisfies that $\prod_{l=1}^L\alpha_{l,j}=1$ for $j\in[m]$. Therefore, we have
\begin{equation}
    \mfX\mfW_{1,j}\dots\mfW_{L-2,j}\mfw_{L-1,j}^\mathrm{col}\mfw_{L,j}^\mathrm{row}=\mfX\hat \mfW_{1,j}\dots\hat \mfW_{L-2,j}\hat \mfw_{L-1,j}^\mathrm{col}\hat \mfw_{L,j}^\mathrm{row}.
\end{equation}
This implies that $\{\hat \mfW_{l,j}\}_{l\in[L],j\in[m]}$ is also feasible for the problem \eqref{mnrm:p_prl}. According to the the inequality of arithmetic and geometric means, the objective function in \eqref{mnrm:p_prl} is lower bounded by 
\begin{equation}
\begin{aligned}
     &\frac{1}{2}\sum_{j=1}^m\sum_{l=1}^L\alpha_{l,j}^2\|\mfW_{l,j}\|_F^2\\
    \geq& \sum_{j=1}^m\frac{L}{2}\prod_{l=1}^L\pp{\alpha_{l,j}^{2/L}\|\mfW_{l,j}\|_F^{2/L}}\\
    =&\frac{L}{2}\sum_{j=1}^m\prod_{l=1}^L\|\mfW_{l,j}\|_F^{2/L}.
\end{aligned}
\end{equation}
The equality is achieved when  $\alpha_{l,j}=\frac{\prod_{l=1}^L\|\mfW_{l,j}\|_F^{1/L}}{\|\mfW_{l,j}\|_F}$ for $l\in[L]$ and $j\in[m]$. As the scaling operation does not change $\prod_{l=1}^L\|\mfW_{l,j}\|_F^{2/L}$, we can simply let $\|\mfW_{l,j}\|_F=1$ and the lower bound becomes $\frac{L}{2}\sum_{i=1}^m \|\mfW_{L,j}\|_F^{2/L}=\frac{L}{2}\sum_{i=1}^m \|\mfw_{L,j}^\mathrm{row}\|_2^{2/L}$. This completes the proof. 

\subsection{Proof of Proposition \ref{prop:parallel_linear_primal_dual}}
We first show that the problem \eqref{mnrm:p_prl_mod} is equivalent to \eqref{mnrm:p_prl_mod2}. The proof is analogous to the proof of Proposition \ref{prop:rescale}. For simplicity, we write $\mfW_{L-1,j}=\mfw_{L-1,j}^\mathrm{col}$ and $\mfW_{L,j}=\mfw_{L,j}^\mathrm{row}$ for $j\in[m]$. Let $\alpha_{l,j}>0$ for $l\in[L]$ and they satisfies that $\prod_{l=1}^L\alpha_{l,j}=1$ for $j\in[m]$. Consider another parallel network  $\{\hat \mfW_{l,j}\}_{l\in[L],j\in[m]}$ whose $j$-th branch is defined by $\hat \mfW_{l,j}=\alpha_{l,j} \mfW_{l,j}$ for $l\in[L]$. As $\prod_{l=1}^L\alpha_{l,j}=1$, we have
\begin{equation}
    \mfX\mfW_{1,j}\dots\mfW_{L-2,j}\mfw_{L-1,j}^\mathrm{col}\mfw_{L,j}^\mathrm{row}=\mfX\hat \mfW_{1,j}\dots\hat \mfW_{L-2,j}\hat \mfw_{L-1,j}^\mathrm{col}\hat \mfw_{L,j}^\mathrm{row}.
\end{equation}
This implies that $\{\hat \mfW_{l,j}\}_{l\in[L],j\in[m]}$ is also feasible for the problem \eqref{mnrm:p_prl_mod}.  According to the the inequality of arithmetic and geometric means, the objective function in \eqref{mnrm:p_prl} is lower bounded by 
\begin{equation}
\begin{aligned}
     &\frac{1}{2}\sum_{j=1}^m\sum_{l=1}^L\alpha_{l,j}^L\|\mfW_{l,j}\|_F^L\\
    \geq& \sum_{j=1}^m\frac{L}{2}\prod_{l=1}^L\pp{\alpha_{l,j}\|\mfW_{l,j}\|_F}\\
    =&\frac{L}{2}\sum_{j=1}^m\prod_{l=1}^L\|\mfW_{l,j}\|_F.
\end{aligned}
\end{equation}
The equality is achieved when  $\alpha_{l,j}=\frac{\prod_{l=1}^L\|\mfW_{l,j}\|_F^{1/L}}{\|\mfW_{l,j}\|_F}$ for $l\in[L]$ and $j\in[m]$. As the scaling operation does not change $\prod_{l=1}^L\|\mfW_{l,j}\|_F$, we can simply let $\|\mfW_{l,j}\|_F=1$ and the lower bound becomes $\frac{L}{2}\sum_{i=1}^m \|\mfW_{L,j}\|_F=\frac{L}{2}\sum_{i=1}^m \|\mfw_{L,j}^\mathrm{row}\|_2$. Hence, the problem \eqref{mnrm:p_prl_mod} is equivalent to \eqref{mnrm:p_prl_mod2}.

For the problem \eqref{mnrm:p_prl_mod2}, we consider the Lagrangian function
\begin{equation}
    L(\mfW_1,\dots,\mfW_L) = \frac{L}{2}\sum_{j=1}^{m} \|\mfw_{L,j}^\mathrm{row}\|_2+\tr\pp{\bLbd^T(\mfY-\sum_{j=1}^m\mfX\mfW_{1,j}\dots\mfW_{L-1,j}^\mathrm{col}\mfW_{L,j}^\mathrm{row})}.
\end{equation}
The primal problem is equivalent to
\begin{equation}
\begin{aligned}
    P^\mathrm{prl}_\mathrm{lin} =&\min_{\mfW_1,\dots,\mfW_L}\max_{\bLbd}L(\mfW_1,\dots,\mfW_L,\bLbd),\\
&\text{ s.t. } \|\mfW_{l,j}\|_F\leq t, j\in[m_l], l\in [L-2], \|\mfw_{L-1,j}^\mathrm{col}\|_2\leq 1, j\in [m_{L-1}],\\
=&\min_{\mfW_1,\dots,\mfW_{L-1}}\max_{\bLbd}\min_{\mfW_L}L(\mfW_1,\dots,\mfW_L,\bLbd),\\
&\text{ s.t. } \|\mfW_{l,j}\|_F\leq 1, l\in [L-2], \|\mfw_{L-1,j}^\mathrm{col}\|_2\leq 1, j\in [m],\\
=&\min_{\mfW_1,\dots,\mfW_{L-1}}\max_{\bLbd}\tr(\bLbd^T\mfY)-\sum_{j=1}^{m_L}\mbI\pp{\|\bLbd^T\mfX\mfW_{1,j}\dots\mfW_{L-2,j}\mfw_{L-1,j}^\mathrm{col}\|_2\leq L/2},\\
&\text{ s.t. } \|\mfW_{l,j}\|_F\leq 1, l\in [L-2], \|\mfw_{L-1,j}^\mathrm{col}\|_2\leq 1, j\in [m].\\
\end{aligned}
\end{equation}
The dual problem follows
\begin{equation}
\begin{aligned}
    D^\mathrm{prl}_\mathrm{lin} = &\max_{\bLbd} \tr(\bLbd^T\mfY), \\
    &\text{ s.t. } \|\bLbd^T\mfX\mfW_{1,j}\dots\mfW_{L-2,j}\|_2\leq L/2, \\
    &\qquad \forall \|\mfW_{l,j}\|_F\leq 1, l\in [L-2], \|\mfW_{L-1,j}^\mathrm{col}\|_2\leq 1, j\in[m],\\
    =&\max_{\bLbd} \tr(\bLbd^T\mfY),\\
    &\text{ s.t. } \|\bLbd^T\mfX\mfW_{1}\dots\mfW_{L-2}\mfw_{L-1}\|_2\leq L/2, \\
    &\qquad \forall \|\mfW_{i}\|_F\leq 1, i\in [L-2], \|\mfw_{L-1}\|_2\leq 1.\\
\end{aligned}
\end{equation}

\subsection{Proof of Theorem \ref{theoremStrongDualParallelLinear}}
We can rewrite the dual problem as
\begin{equation}\label{dual:prl}
\begin{aligned}
    D^\mathrm{prl}_\mathrm{lin} =& \max_{\bLbd} \tr(\bLbd^T\mfY), \\
    &\text{ s.t. } \|\bLbd^T\mfX\mfW_{1}\dots\mfW_{L-2}\mfw_{L-1}\|_2\leq L/2,\\
    &\qquad\forall (\mfW_1,\dots,\mfW_{L-2},\mfw_{L-1})\in \Theta,
\end{aligned}
\end{equation}
where the set $\Theta$ is defined as
\begin{equation}
    \Theta = \{(\mfW_{1},\dots,\mfW_{L-2},\mfw_{L-1})| \|\mfW_l\|_F\leq 1, l\in[L-2],\|\mfw_{L-1}\|_2\leq 1\}.
\end{equation}

By writing $\theta=(\mfW_{1},\dots,\mfW_{L-2},\mfw_{L-1})$, the bi-dual problem, i.e., the dual problem of \eqref{dual:prl}, is given by
\begin{equation}\label{dual_d:prl}
    \begin{array}{l}
\min \|\boldsymbol{\mu}\|_\text{TV}, \\
\text { s.t. } \int_{\theta\in \Theta} \mfX\mfW_1\dots\mfW_{L-2}\mfw_{L-1}d \boldsymbol{\mu}\left(\theta\right)=\mathbf{Y}.
\end{array}
\end{equation}

Here $\boldsymbol{\mu}:\Sigma\to \mbR^K$ is a signed vector measure, where $\Sigma$ is a $\sigma$-field of subsets of $\Theta$ and $\|\boldsymbol{\mu}\|_\text{TV}$ is its total variation. 
The formulation in \eqref{dual_d:prl} has infinite width in each layer. According to Theorem \ref{thm:represent} in Appendix \ref{proof:thm:represent}, the measure $\boldsymbol{\mu}$ in the integral can be represented by finitely many Dirac delta functions. Therefore, there exists a critical threshold of the number of branchs $m^*<KN+1$ such that we can rewrite the problem \eqref{dual_d:prl} as
\begin{equation}\label{para:l_fin_prl}
\begin{aligned}
\min &\sum_{j=1}^{m^*} \|\mfw_{L,j}^\mathrm{row}\|_2, \\
\text { s.t. }&\sum_{j=1}^{m^*} \mfX\mfW_{1,j}\dots\mfW_{L-2,j}\mfw_{L-1,j}^\mathrm{col}\mfw_{L,j}^\mathrm{row}=\mathbf{Y},\\
&\|\mfW_{i,j}\|_F\leq 1, l\in[L-2],\|\mfw_{L-1,j}^\mathrm{col}\|_2\leq 1,j\in[m^*].
\end{aligned}
\end{equation}
Here the variables are $\mfW_{l,j}$ for $l\in[L-2]$ and $j\in[m^*]$,  $\mfW_{L-1}$ and $\mfW_{L}$. This is equivalent to \eqref{mnrm:p_prl_mod2}. As the strong duality holds for the problem \eqref{dual:prl} and \eqref{dual_d:prl}, the primal problem \eqref{mnrm:p_prl_mod2} is equivalent to the dual problem \eqref{dual:prl} as long as $m\geq m^*$.

Now, we compute the optimal value of $D^\mathrm{prl}_\mathrm{lin}$. Similar to the proof of Theorem \ref{theoremOptimalStandardLinear}, we can show that the constraints in the dual problem \eqref{dual:prl} is equivalent to
\begin{equation}
    \|\bLbd^T\mfX\|_2\leq L/2.
\end{equation}
Therefore, we have
\begin{equation}
  \tr(\bLbd^T\mfY)\leq \|\blbd^T\mfX\|_2\|\mfX^\dagger \mfY\|_*\leq \frac{L}{2}\|\mfX^\dagger \mfY\|_*.
\end{equation}
This implies that $P^\mathrm{prl}_\mathrm{lin}=D^\mathrm{prl}_\mathrm{lin}=\frac{L}{2}\|\mfX^\dagger \mfY\|_*$.

\section{Stairs of duality gap for standard deep linear networks}
We consider partially dualizing the non-convex optimization problem by exchanging a subset of the minimization problems with respect to the hidden layers. Consider the Lagrangian for the primal problem of standard deep linear network
\begin{equation}\label{p:linear2}
\begin{aligned}
        P_\mathrm{lin}(t)=&\min_{\{\mfW_l\}_{l=1}^{L-1}}\max_{\bLbd}\tr(\bLbd^T\mfY)-\mbI\pp{\|\bLbd^T\mfX\mfW_1\dots\mfW_{L-2}\mfw_{L-1}\|_2\leq 1},\\
&\text{ s.t. } \|\mfW_i\|_F\leq t, i\in [L-2], \|\mfw_{L-1}\|_2\leq 1.\\
\end{aligned}
\end{equation}
By changing the order of $L-2$ $\min$s and the $\max$ in \eqref{p:linear2}, for $l=0,1,\dots,L-2$, we can define the $l$-th partial ``dual'' problem
\begin{equation}
\begin{aligned}
D_\mathrm{lin}^{(l)}(t)= &\min_{ \mfW_1,\dots\mfW_l}\max_{\bLbd} \min_{\mfW_{l+1},\dots,\mfW_{L-2}} \tr(\bLbd^T\mfY)-\mbI\pp{\|\bLbd^T\mfX\mfW_1\dots\mfW_{L-2}\mfw_{L-1}\|_2\leq 1},\\
    &\text{ s.t. } \|\mfW_i\|_F\leq t, i\in [L-2], \|\mfw_{L-1}\|_2\leq 1.
\end{aligned}
\end{equation}

For $l=0$, $D_\mathrm{lin}^{(l)}(t)$ corresponds the primal problem $P_\mathrm{lin}(t)$, while for $l=L-2$, $D_\mathrm{lin}^{(l)}(t)$ is the dual problem $D_\mathrm{lin}(t)$. From the following proposition, we illustrate that the dual problem of $D_\mathrm{lin}^{(l)}(t)$ corresponds to a minimum norm problem of a neural network with parallel structure.  

\begin{proposition}
There exists a threshold of the number of branches $m^*\leq KN+1$ such that the problem $D_\mathrm{lin}^{(l)}(t)$ is equivalent to the ``bi-dual'' problem
\begin{equation}\label{min_norm:mul_para_l}
    \begin{aligned}
\min &\sum_{j=1}^{m^*} \|\mfw_{L,j}^\mathrm{row}\|_2, \\
\text { s.t. }&\sum_{j=1}^{m^*} \mfX\mfW_1\dots\mfW_l\mfW_{l+1,j}\dots\mfW_{L-2,j}\mfw_{L-1,j}^\mathrm{col}\mfw_{L,j}^\mathrm{row}=\mathbf{Y},\\
&\|\mfW_i\|_F\leq t, i\in[l], \|\mfW_{i,j}\|_F\leq t, i=l+1,\dots,L-2, j\in[m^*],\\
&\|\mfw_{L-1,j}^\mathrm{col}\|_2\leq 1,j\in[m^*],
\end{aligned}
\end{equation}
where the variables are $\mfW_i\in \mbR^{m_{i-1}\times m_i}$ for $i\in[l]$, $\mfW_{i,j}\in \mbR^{m_{i-1}\times m_i}$ for $i=l+1,\dots,L-2$, $j\in[m^*]$, $\mfW_{L-1}\in\mbR^{m_{L-2}\times m^*}$ and $\mfW_{L}\in \mbR^{m^*\times m_L}$.
\end{proposition}

We can interpret the problem \eqref{min_norm:mul_para_l} as the minimum norm problem of a linear network with parallel structures in $(l+1)$-th to $(L-2)$-th layers. This indicates that for $l=0,1,\dots,L-2$, the bi-dual formulation of $D_\mathrm{lin}^{(l)}(t)$ can be viewed as an interpolation from a network with standard structure to a network with parallel structure. 
Now, we calculate the exact value of $D_\mathrm{lin}^{(l)}(t)$. 
\begin{proposition}\label{prop:dlt}
The optimal value $D_\mathrm{lin}^{(l)}(t)$ follows
\begin{equation}
    D_\mathrm{lin}^{(l)}(t) = t^{-(L-2)} \|\mfX^\dagger\mfY\|_{S_{2/(l+2)}}.
\end{equation}
\end{proposition}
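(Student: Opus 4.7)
I would start from the bi-dual (parallel) representation of $D_{\mathrm{lin}}^{(l)}(t)$ asserted in the preceding proposition and reduce it in three stages: first collapse the parallel tail of each branch into a nuclear norm, next collapse the shared prefix into a single Schatten-$2/l$ constraint via Proposition \ref{prop:decomp}, and finally handle the resulting linear-regression minimization. Concretely, write $\mfU:=\mfW_1\cdots\mfW_l$ for the shared prefix and, for each branch $j$, $\mfV_j:=\mfW_{l+1,j}\cdots\mfw_{L-1,j}^\mathrm{col}$ for the parallel tail. A balancing argument (the same AM--GM step driving Proposition \ref{prop:decomp}) shows that under $\|\mfW_{i,j}\|_F\le t$ and $\|\mfw_{L-1,j}^\mathrm{col}\|_2\le 1$, the vector $\mfV_j$ ranges exactly over $\|\mfV_j\|_2\le t^{L-2-l}$, and $\mfU$ admits an admissible factorization iff $\|\mfU\|_{S_{2/l}}\le t^l$. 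Setting $\mfB:=\sum_j \mfV_j\mfw_{L,j}^\mathrm{row}$, the variational form of the nuclear norm gives $\min\sum_j\|\mfw_{L,j}^\mathrm{row}\|_2 = t^{-(L-2-l)}\|\mfB\|_*$, whence
\begin{equation*}
  D_{\mathrm{lin}}^{(l)}(t) \;=\; t^{-(L-2-l)}\min_{\mfU,\mfB:\,\mfX\mfU\mfB=\mfY,\,\|\mfU\|_{S_{2/l}}\le t^l}\|\mfB\|_*.
\end{equation*}

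\textbf{Joint optimization.} Introduce $\mfC:=\mfU\mfB$ and swap the order of minimization: for fixed $\mfC$ with singular values $\sigma_i$, I would compute
\begin{equation*}
  \phi(\mfC) \;:=\; \min_{\mfU\mfB=\mfC,\,\|\mfU\|_{S_{2/l}}\le t^l}\|\mfB\|_*.
\end{equation*}
By unitary invariance of Schatten norms together with a majorization argument (optimal $\mfU,\mfB$ may be chosen to share the singular vectors of $\mfC$), $\phi(\mfC)$ reduces to the scalar program $\min_{u_i>0}\sum_i\sigma_i/u_i$ subject to $\sum_i u_i^{2/l}\le t^2$. Applying H\"older's inequality with conjugate exponents $(l+2)/2$ and $(l+2)/l$ to the factorization $\sigma_i^{2/(l+2)}=(\sigma_i/u_i)^{2/(l+2)}\cdot u_i^{2/(l+2)}$ gives
\begin{equation*}
  \sum_i \sigma_i/u_i \;\ge\; t^{-l}\|\mfC\|_{S_{2/(l+2)}},
\end{equation*}
with equality at $u_i\propto\sigma_i^{l/(l+2)}$; hence $\phi(\mfC)=t^{-l}\|\mfC\|_{S_{2/(l+2)}}$.

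\textbf{Minimizing over $\mfC$ and main obstacle.} It remains to minimize $\|\mfC\|_{S_{2/(l+2)}}$ over $\{\mfC:\mfX\mfC=\mfY\}$. Writing $\mfC=\mfX^\dagger\mfY+\mfN$ with $\mfX\mfN=0$, the two summands lie in orthogonal row-subspaces, so the singular-value interlacing inequality for stacked blocks yields $\sigma_i(\mfC)\ge\sigma_i(\mfX^\dagger\mfY)$, and therefore $\|\mfC\|_{S_{2/(l+2)}}\ge\|\mfX^\dagger\mfY\|_{S_{2/(l+2)}}$ with equality at $\mfN=0$. Assembling the three scaling factors yields exactly $D_{\mathrm{lin}}^{(l)}(t)=t^{-(L-2)}\|\mfX^\dagger\mfY\|_{S_{2/(l+2)}}$, as claimed. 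The main obstacle is the simultaneous-SVD step in the joint optimization: because the Schatten-$2/l$ quasi-norm is non-convex for $l\ge 3$, the alignment of $\mfU,\mfB$ with the SVD of $\mfC$ is not an immediate consequence of convex duality and must be justified directly, for instance via Ky Fan majorization inequalities for singular values of products; identifying the precise H\"older exponent pair $((l+2)/2,(l+2)/l)$ that yields the $S_{2/(l+2)}$ norm (and no neighbouring quasi-norm) is the crux of the calculation.
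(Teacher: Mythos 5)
Your argument is correct in substance but takes a genuinely different route from the paper's. The paper never touches the bi-dual: it stays in the partial min--max form of $D_\mathrm{lin}^{(l)}(t)$, first absorbs the inner minimization over $\mfW_{l+1},\dots,\mfW_{L-2}$ into the constraint (the worst case of $\|\bLbd^T\mfX\mfW_1\cdots\mfW_{L-2}\mfw_{L-1}\|_2\le 1$ over Frobenius balls of radius $t$ is $\|\bLbd^T\mfX\mfW_1\cdots\mfW_l\|_2\le t^{-(L-2-l)}$, attained by rank-one chains), which extracts the factor $t^{-(L-2-l)}$, and then observes that what remains is exactly the primal $P_\mathrm{lin}(t)$ of an $(l+2)$-layer standard linear network, whose value $t^{-l}\|\mfX^\dagger\mfY\|_{S_{2/(l+2)}}$ is already available from Theorem \ref{theoremOptimalStandardLinear}. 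You instead pass to the parallel bi-dual representation, collapse the tails into a nuclear norm and the shared prefix into a Schatten-$2/l$ ball, and re-derive the $(l+2)$-layer value by hand via H\"older; the two computations are the same Schatten factorization calculus, the paper's being shorter because it reuses Theorem \ref{theoremOptimalStandardLinear} while yours is more self-contained. On the obstacle you flag: the simultaneous-SVD alignment does need justification, but it is supplied by machinery already in the paper --- either invoke Horn's log-majorization $\prod_{i\le k}\sigma_i(\mfU\mfV)\le\prod_{i\le k}\sigma_i(\mfU)\,\sigma_i(\mfV)$, which upgrades to weak majorization of the $p$-th powers for every $p>0$ and then feeds directly into your H\"older step, or reduce to Proposition \ref{prop:decomp}, whose proof handles exactly this point via Lemmas \ref{lem:kara} and \ref{lem:proj}. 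One cosmetic slip: in your final step the two summands have orthogonal \emph{column} spaces (the range of $\mfX^T$ versus the kernel of $\mfX$), not row spaces; the inequality $\sigma_i(\mfX^\dagger\mfY)\le\sigma_i(\mfW)$ for feasible $\mfW$ is just Lemma \ref{lem:proj} applied to the projection $\mfX^\dagger\mfX$.
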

Suppose that the eigenvalues $\mfX^\dagger\mfY$ are not identical to each other. Then, we have
\begin{equation}
    P_\mathrm{lin}(t)=D_\mathrm{lin}^{(L-2)}(t)> D_\mathrm{lin}^{(L-3)}(t)> \dots> D_\mathrm{lin}^{(0)}(t)=D(t).
\end{equation}
In Figure \ref{fig:stair}, we plot $D_\mathrm{lin}^{(l)}(t)$ for $l=0,\dots,5$ for an example. 
\begin{figure}[htbp]
\centering
\begin{minipage}[t]{0.8\textwidth}
\centering
\includegraphics[width=\linewidth]{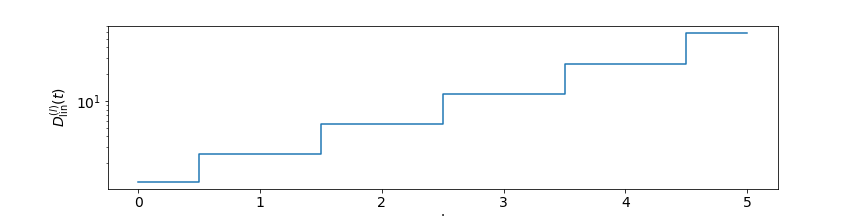}
\end{minipage} 
\caption{Example of $D_\mathrm{lin}^{(l)}(t)$. }\label{fig:stair}
\end{figure}

\subsection{Proof of Proposition \ref{prop:dlt}}
\label{proof:prop:dlt}
We note that
\begin{equation}
\begin{aligned}
    &\max_{\bLbd} \tr(\bLbd^T\mfY),\\
    &\text{ s.t. } \|\bLbd^T\mfX\mfW_1\dots\mfW_{L-2}\|_2\leq 1,\|\mfW_i\|_F\leq t, i=l+1,\dots,L-2,\\
    =&\max_{\bLbd}\min_{\mfW_{j+1},\dots,\mfW_{L-2}} \tr(\bLbd^T\mfY),\text{ s.t. } \|\bLbd^T\mfX\mfW_1\dots\mfW_l\|_2\leq t^{-(L-2-l)}.
\end{aligned}
\end{equation}
Therefore, we can rewrite $D_\mathrm{lin}^{(l)}(t)$ as
\begin{equation}
\begin{aligned}
   D_\mathrm{lin}^{(l)}(t) = &\min_{ \mfW_1,\dots\mfW_l}\max_{\bLbd} \tr(\bLbd^T\mfY),\\
    &\text{ s.t. } \|\bLbd^T\mfX\mfW_1\dots\mfW_l\|_2\leq t^{-(L-2-l)},\|\mfW_i\|_F\leq t, i\in [l],\\
    = &\min_{ \mfW_1,\dots\mfW_l}\max_{\bLbd} t^{-(L-2-l)}\tr(\bLbd^T\mfY),\\
    &\text{ s.t. } \|\bLbd^T\mfX\mfW_1\dots\mfW_l\|_2\leq 1,\|\mfW_i\|_F\leq t, i\in [l].
\end{aligned}
\end{equation}
From the equation \eqref{equ:pt_standard_linear_optimal}, we note that
\begin{equation}
\begin{aligned}
    &\min_{ \mfW_1,\dots\mfW_j}\max_{\bLbd}\tr(\bLbd^T\mfY)\\
    &\text{ s.t. } \|\bLbd^T\mfX\mfW_1\dots\mfW_j\|_2\leq 1,\|\mfW_i\|_F\leq t, i\in [j],\\
    =&\min \sum_{j=1}^K\|\mfw_{l+2,j}\|_2, \\
    &\text{ s.t. } \|\mfW_i\|_F\leq t, i\in [L-2], \|\mfw_{L-1,j}\|_2\leq 1, j\in [m_{L-1}],\\
    &\qquad \; \mfX\mfW_1\dots\mfW_{l+2}=\mfY\\
    =&t^{-l}\|\mfX^\dagger\mfY\|_{S_{2/(l+2)}}.
\end{aligned}
\end{equation}

This completes the proof.

\section{Proofs of main results for ReLU networks}

\subsection{Proof of Proposition \ref{prop:dual_relu}}
For the problem of $P(t)$, introduce the Lagrangian function
\begin{equation}
\begin{aligned}
    L(\mfW_1,\mfW_2,\mfW_3,\bLbd) = \sum_{j=1}^K \|\mfw_{3,j}^\mathrm{row}\|_2-\tr(\bLbd^T(((\mfX\mfW_1)_+\mfW_2)_+\mfW_3-\mfY)).
\end{aligned}
\end{equation}
According to the convex duality of two-layer ReLU network, we have
\begin{equation}
    \begin{aligned}
P_\mathrm{ReLU}(t)=&\min_{\|\mfW_1\|_F\leq t,\|\mfw_2\|\leq 1}\max_{\bLbd}\tr(\bLbd^T\mfY)-\mbI( \|\bLbd^T((\mfX\mfW_1)_+\mfw_2)_+\|_2\leq 1)\\
=&\min_{\|\mfW_1\|_F\leq t}\max_{\bLbd}\min_{\|\mfw_2\|\leq 1}\tr(\bLbd^T\mfY)-\mbI( \|\bLbd^T((\mfX\mfW_1)_+\mfw_2)_+\|_2\leq 1)\\
=&\min_{\|\mfW_1\|_F\leq t}\max_{\bLbd}\tr(\bLbd^T\mfY),\text{ s.t. } \|\bLbd^T\mfv\|_2\leq 1,\forall \mfv\in \mcA(\mfW_1).\\
\end{aligned}
\end{equation}

By changing the min and max, we obtain the dual problem. 
\begin{equation}
    D_\mathrm{ReLU}(t) = \max_{\bLbd}\tr(\bLbd^T\mfY),\text{ s.t. } \|\bLbd^T\mfv\|_2\leq 1,\mfv\in \mcA(\mfW_1),\forall \|\mfW_1\|_F\leq t.
\end{equation}
The dual of the dual problem writes
\begin{equation}
    \begin{array}{l}
\min \|\boldsymbol{\mu}\|_\text{TV}, \\
\text { s.t. } \int_{\|\mfW_1\|_F\leq t,\|\mfw_2\|_2\leq 1}\left((\mfX\mfW_1)_+\mathbf{w}_{2}\right)_{+} d \boldsymbol{\mu}\left(\mfW_{1}, \mfw_{2}\right)=\mathbf{Y}.
\end{array}
\end{equation}
Here $\bmu$ is a signed vector measure and $\|\boldsymbol{\mu}\|_\text{TV}$ is its total variation. 
Similar to the proof of Proposition \ref{propDualLinearStandard}, we can find a finite representation for the optimal measure and transform this problem to
\begin{equation}
    \begin{aligned}
 &\min_{\{\mfW_{1,j}\}_{j=1}^{m^*},\mfW_2\in \mbR^{m_1\times m^*},\mfW_3\in \mbR^{m^*\times K}} \sum_{j=1}^K\|\mfw_{3,j}\|_2,\\
&\text{ s.t. }\sum_{j=1}^{m^*}  ((\mfX\mfW_{1,j})_+\mfw_{2,j})_+\mfw_{3,j}^T= \mfY, \|\mfW_{1,j}\|_F\leq t,\|\mfw_{2,j}\|_2\leq 1.
\end{aligned}
\end{equation}
Here $m^*\leq KN+1$. This completes the proof.

\subsection{Proof of Theorem \ref{thm:3l_r1}}
For rank-1 data matrix that $\mfX=\mfc\mfa_0^T$, suppose that $\mfA_1=(\mfX\mfW_1)_{+}$. It is easy to observe that
$$
\mfA_1 =  (\mfc)_+\mfa_{1,+}^T+(-\mfc)_+\mfa_{1,-}^T,
$$
Here we let $\mfa_{1,+}=(\mfW_1^T\mfa_0)_+$ and $\mfa_{1,-}=(-\mfW_1^T\mfa_0)_+$. 

For a three-layer network, suppose that $\blbd^*$ is the optimal solution to the dual problem $D_\mathrm{ReLU}(t)$. We consider the extreme points defined by
\begin{equation}\label{extreme_point}
\mathop{\arg\max}_{\|\mfW_1\|_F\leq t,\|\mfw_2\|_2\leq 1}|(\blbd^*)^T((\mfX\mfW_1)_+\mfw_2)_+|.
\end{equation}
For fixed $\mfW_1$, because $\mfa_{1,+}^T\mfa_{1,-}=0$, suppose that 
$$
\mfw_{2}=u_1\mfa_{1,+}+u_2\mfa_{1,-}+u_3 \mfr,
$$
where $\mfr^T\mfa_{1,+}=\mfr^T\mfa_{1,-}=0$ and $\|\mfr\|_2=1$. The maximization problem on $\mfw_{2}$ reduces to
$$
\begin{aligned}
\mathop{\arg\max}_{u_1,u_2,u_3}&\left| (\blbd^*)^T(\mfc)_+\|\mfa_{1,+}\|_2^2  (u_1)_+ + (\blbd^*)^T(-\mfc)_+\|\mfa_{1,-}\|_2^2  (u_2)_+\right|\\
\text{s.t. }& u_1^2 \|\mfa_{1,+}\|_2^2+u_2^2\|\mfa_{1,+}\|_2^2+u_3^2\leq 1.
\end{aligned}
$$
If $(\blbd^*)^T(\mfc)_+$ and $(\blbd^*)^T(-\mfc)_+$ have different signs, then the optimal value is 
$$
\max\{|(\blbd^*)^T(\mfc)_+|\|\mfa_{1,+}\|_2,|(\blbd^*)^T(-\mfc)_+|\|\mfa_{1,-}\|_2\}.
$$
And the corresponding optimal $\mfw_{2}$ is $\mfw_{2}=\mfa_{1,+}/\|\mfa_{1,+}\|_2$ or $\mfw_{2}=\mfa_{1,-}/\|\mfa_{1,-}\|_2$. Then, the problem becomes 
$$
\mathop{\arg\max}_{\mfW_1}\max\{|(\blbd^*)^T(\mfc)_+|\|\mfa_{1,+}\|_2,|(\blbd^*)^T(-\mfc)_+|\|\mfa_{1,-}\|_2\}. 
$$
We note that
$$
\max\{\|\mfa_{1,+}\|_2, \|\mfa_{1,-}\|_2\} \leq \|\mfW_1^T\mfa_0\|_2\leq \|\mfW_1\|_2\|\mfa_0\|_2\leq t\|\mfa_0\|_2.
$$
Thus the optimal $\mfW_1$ is given by
$$
\mfW_1 = t\mathrm{sign}(|(\blbd^*)^T(\mfc)_+|- |(\blbd^*)^T(-\mfc)_+|)\brho_0\brho_1^T.
$$
Here $\brho_0=\mfa_0/\|\mfa_0\|_2$ and $\brho_1\in \mbR^{m_l}_+$ satisfies $\|\brho_1\|=1$. This implies that the optimal $\mfw_2$ is given by $\mfw_2=\brho_1$.

On the other hand, if $(\blbd^*)^T(\mfc)_+$ and $(\blbd^*)^T(-\mfc)_+$ have same signs, then, the optimal $\mfw_2$ follows
$$
\mfw_2 = \frac{|(\blbd^*)^T(\mfc)_+|\mfa_{1,+}+|(\blbd^*)^T(-\mfc)_+|\mfa_{1,-}}{\sqrt{((\blbd^*)^T(\mfc)_+)^2\|\mfa_{1,+}\|_2^2+((\blbd^*)^T(-\mfc)_+)^2\|\mfa_{1,-}\|_2^2}}.
$$ 
The maximization problem of $\mfW_1$ is equivalent to
$$
\mathop{\arg\max}_{\|\mfW_1\|_F\leq t} ((\blbd^*)^T(\mfc)_+)^2\|\mfa_{1,+}\|_2^2+((\blbd^*)^T(\mfc)_-)^2\|\mfa_{1,-}\|_2^2.
$$
By noting that
$$
\|\mfa_{1,+}\|_2^2+\|\mfa_{1,-}\|_2^2=\|\mfW_1^T\mfa_0\|_2^2\leq \|\mfW_1\|_2^2\|\mfa_0\|_2^2\leq t^2\|\mfa_0\|_2^2,
$$
the optimal $\mfW_1$ is given by
$$
\mfW_1 = t\mathrm{sign}(|(\blbd^*)^T(\mfc)_+|- |(\blbd^*)^T(-\mfc)_+|)\brho_0\brho_1^T.
$$
Here $\brho_0=\mfa_0/\|\mfa_0\|_2$ and $\brho_1\in \mbR^{m_1}_+$ satisfies $\|\brho_1\|=1$.

\subsection{Proof of Proposition \ref{prop:parallel_relu_dual}}
Analogous to the proof of Proposition \ref{prop:parallel_linear_primal_dual}, we can reformulate \eqref{relu_deep:p_prl_mod} into \eqref{relu_deep:p_prl_mod2}. The rest of the proof is analogous to the proof of Proposition \ref{prop:parallel_linear_primal_dual}. For the problem \eqref{relu_deep:p_prl_mod2}, we consider the Lagrangian function
\begin{equation}
\begin{aligned}
        &L(\mfW_1,\dots,\mfW_L)  \\
    =&\frac{L}{2}\sum_{j=1}^{m} \|\mfw_{L,j}^\mathrm{row}\|_2+\tr\pp{\bLbd^T(\mfY-\sum_{j=1}^m(((\mfX\mfW_{1,j})_+\dots\dots\mfW_{L-2,j})_+\mfw_{L-1,j}^\mathrm{col})_+\mfw_{L,j}^\mathrm{row})}.
\end{aligned}
\end{equation}
The primal problem is equivalent to
\begin{equation}
\begin{aligned}
    &P^\mathrm{prl}_\mathrm{ReLU}\\
    =&\min_{\mfW_1,\dots,\mfW_L}\max_{\bLbd}L(\mfW_1,\dots,\mfW_L,\bLbd),\\
&\text{ s.t. } \|\mfW_{l,j}\|_F\leq t, j\in[m_l], l\in [L-2], \|\mfw_{L-1,j}^\mathrm{col}\|_2\leq 1, j\in [m_{L-1}],\\
=&\min_{\mfW_1,\dots,\mfW_{L-1}}\max_{\bLbd}\min_{\mfW_L}L(\mfW_1,\dots,\mfW_L,\bLbd),\\
&\text{ s.t. } \|\mfW_{l,j}\|_F\leq 1, l\in [L-2], \|\mfw_{L-1,j}^\mathrm{col}\|_2\leq 1, j\in [m],\\
=&\min_{\mfW_1,\dots,\mfW_{L-1}}\max_{\bLbd}\tr(\bLbd^T\mfY)-\sum_{j=1}^{m}\mbI\pp{\|\bLbd^T(((\mfX\mfW_{1,j})_+\dots\mfW_{L-2,j})_+\mfw_{L-1,j}^\mathrm{col})_+\|_2\leq L/2},\\
&\text{ s.t. } \|\mfW_{l,j}\|_F\leq 1, l\in [L-2], \|\mfw_{L-1,j}^\mathrm{col}\|_2\leq 1, j\in [m].\\
\end{aligned}
\end{equation}
By exchanging the order of $\min$ and $\max$, the dual problem follows
\begin{equation}
\begin{aligned}
    D^\mathrm{prl}_\mathrm{ReLU} = &\max_{\bLbd} \tr(\bLbd^T\mfY), \\
    &\text{ s.t. }\|\bLbd^T(((\mfX\mfW_{1,j})_+\dots\mfW_{L-2,j})_+\mfw_{L-1,j}^\mathrm{col})_+\|_2\leq L/2, \\
    &\qquad \forall \|\mfW_{l,j}\|_F\leq 1, l\in [L-2], \|\mfw_{L-1,j}^\mathrm{col}\|_2\leq 1, j\in[m],\\
    =&\max_{\bLbd} \tr(\bLbd^T\mfY),\\
    &\text{ s.t. } \|\bLbd^T(((\mfX\mfW_{1})_+\dots\mfW_{L-2})_+\mfw_{L-1})_+\|_2\leq L/2, \\
    &\qquad \forall \|\mfW_{i}\|_F\leq 1, i\in [L-2], \|\mfw_{L-1}\|_2\leq 1.\\
\end{aligned}
\end{equation}

\subsection{Proof of Theorem \ref{theoremStrongDualParallelReLU}}

The proof is analogous to the proof of Theorem \ref{theoremStrongDualParallelLinear}. We can rewrite the dual problem as
\begin{equation}\label{dual:prl_relu}
\begin{aligned}
    D^\mathrm{prl}_\mathrm{ReLU} =& \max_{\bLbd} \tr(\bLbd^T\mfY), \\
    &\text{ s.t. } \|\bLbd^T(((\mfX\mfW_{1})_+\dots\mfW_{L-2})_+\mfw_{L-1})_+\|_2\leq L/2,\\
    &\qquad\forall (\mfW_1,\dots,\mfW_{L-2},\mfw_{L-1})\in \Theta,
\end{aligned}
\end{equation}
where the set $\Theta$ is defined as
\begin{equation}
    \Theta = \{(\mfW_{1},\dots,\mfW_{L-2},\mfw_{L-1})| \|\mfW_l\|_F\leq 1, l\in[L-2],\|\mfw_{L-1}\|_2\leq 1\}.
\end{equation}

By writing $\theta=(\mfW_{1},\dots,\mfW_{L-2},\mfw_{L-1})$, the bi-dual problem, i.e., the dual problem of \eqref{dual:prl_relu}, is given by
\begin{equation}\label{dual_d:prl_relu}
    \begin{array}{l}
\min \|\boldsymbol{\mu}\|_\text{TV}, \\
\text { s.t. } \int_{\theta\in \Theta} (((\mfX\mfW_1)_+\dots\mfW_{L-2})_+\mfw_{L-1})_+d \boldsymbol{\mu}\left(\theta\right)=\mathbf{Y}.
\end{array}
\end{equation}

Here $\boldsymbol{\mu}:\Sigma\to \mbR^K$ is a signed vector measure, where $\Sigma$ is a $\sigma$-field of subsets of $\Theta$ and $\|\boldsymbol{\mu}\|_\text{TV}$ is its total variation. The formulation in \eqref{dual_d:prl_relu} has infinite width in each layer. According to Theorem \ref{thm:represent} in Appendix \ref{proof:thm:represent}, the measure $\boldsymbol{\mu}$ in the integral can be represented by finitely many Dirac delta functions. Therefore, there exists $m^*\leq KN+1$ such that we can rewrite the problem \eqref{dual_d:prl_relu} as
\begin{equation}\label{para:l_fin_prl_relu}
\begin{aligned}
\min &\sum_{j=1}^{m^*} \|\mfw_{L,j}^\mathrm{row}\|_2, \\
\text { s.t. }&\sum_{j=1}^{m^*} (((\mfX\mfW_{1,j})_+\dots\mfW_{L-2,j})_+\mfw_{L-1,j}^\mathrm{col})_+\mfw_{L,j}^\mathrm{row}=\mathbf{Y},\\
&\|\mfW_{l,j}\|_F\leq 1, l\in[L-2],\|\mfw_{L-1,j}^\mathrm{col}\|_2\leq 1,j\in[m^*].
\end{aligned}
\end{equation}
Here the variables are $\mfW_{l,j}$ for $l\in[L-2]$ and $j\in[m^*]$,  $\mfW_{L-1}$ and $\mfW_{L}$. This is equivalent to \eqref{relu_deep:p_prl_mod2}. As the strong duality holds for the problem \eqref{dual:prl_relu} and \eqref{dual_d:prl_relu}, the primal problem \eqref{relu_deep:p_prl_mod2} is equivalent to the dual problem \eqref{dual:prl_relu} as long as $m\geq m^*$.


\subsection{Proof of Theorem \ref{thm:find_primal}}
Consider the following dual problem
\begin{equation}\label{relu_deep:d_prl_sub}
    D^\mathrm{prl,sub}_\mathrm{ReLU} = \max\; \tr(\bLbd^T\mfY), 
    \text{ s.t. } \max_{i\in[m^*]} \|\bLbd^T\mfv_i\|_2\leq L/2.
\end{equation}
Apparently we have $D^\mathrm{prl,sub}_\mathrm{ReLU}\leq D^\mathrm{prl}_\mathrm{ReLU}$. As $\bLbd^*$ is the optimal solution to $D^\mathrm{prl}_\mathrm{ReLU}$ and $\bLbd^*$ is feasible to $D^\mathrm{prl,sub}_\mathrm{ReLU}$, we have $D^\mathrm{prl,sub}_\mathrm{ReLU}\geq D^\mathrm{prl}_\mathrm{ReLU}$. This implies that $D^\mathrm{prl,sub}_\mathrm{ReLU}= D^\mathrm{prl}_\mathrm{ReLU}$. We note that \eqref{relu_deep:p_prl_sub} is the dual problem of \eqref{relu_deep:d_prl_sub}. Therefore, as a corollary of Theorem \ref{theoremStrongDualParallelReLU}, we have 
$$
P^\mathrm{prl,sub}_\mathrm{ReLU}=D^\mathrm{prl,sub}_\mathrm{ReLU}=D^\mathrm{prl}_\mathrm{ReLU}=P^\mathrm{prl}_\mathrm{ReLU}.
$$
Therefore, $(\mfW_1,\dots,\mfW_L)$ is the optimal solution to \eqref{relu_deep:p_prl_mod2}. 

\section{Proofs of auxiliary results}
\subsection{Proof of Lemma \ref{lem:kara}}

Denote $a\in\mbR^n$ such that $a_i=A_{ii}$ and denote $b\in \mbR^n$ such that $b_i=\lambda_i(A)$. We can show that $a$ is majorized by $b$, i.e., for $k\in [n-1]$, we have
\begin{equation}
    \sum_{i=1}^ka_{(i)}\leq \sum_{i=1}^kb_{(i)},
\end{equation}
and $\sum_{i=1}^n a_i=\sum_{i=1}^n b_i$. Here $a_{(i)}$ is the $i$-th largest entry in $a$. We first note that
$$
\sum_{i=1}^n A_{ii}=\tr(A)=\sum_{i=1}^n \lambda_i(A).
$$ 
On the other hand, for $k\in [n-1]$, we have
\begin{equation}
\begin{aligned}
 \sum_{i=1}^k a_{(i)}=&\max_{v\in \mbR^n, v_i\in\{0,1\}, 1^Tv=k} v^Ta \\
=& \max_{v\in \mbR^n, v_i\in\{0,1\}, 1^Tv=k} \tr(\diag(v)A\diag(v))\\
\leq &  \max_{V\in \mbR^{k\times n}, VV^T=I} \tr(VAV^T)\\
=&\sum_{i=1}^k \lambda_i(A)=\sum_{i=1}^k b_{(i)}.
\end{aligned}
\end{equation}
Therefore, $a$ is majorized by $b$. As $f(x)=-x^p$ is a convex function, according to the Karamata's inequality, we have
$$
\sum_{i=1}^n f(a_i)\leq \sum_{i=1}^n f(b_i).
$$
This completes the proof.

\subsection{Proof of Lemma \ref{lem:proj}}
According to the min-max principle for singular value, we have
$$
\sigma_i(W) = \min_{\mathrm{dim}(S)=d-i+1}\max_{x\in S, \|x\|_2=1} \|Wx\|_2.
$$
As $P$ is a projection matrix, for arbitrary $x\in \mbR^d$, we have $\|PWx\|_2\leq \|Wx\|_2$. Therefore, we have
$$
\max_{x\in S, \|x\|_2=1} \|PWx\|_2\leq \max_{x\in S, \|x\|_2=1} \|Wx\|_2.
$$
This completes the proof. 


\section{Caratheodory's theorem and finite representation}
\label{proof:thm:represent}
We first review a generalized version of Caratheodory's theorem introduced in \citep{l1reg}. 
\begin{theorem}\label{thm:caratheodory:scalar}
Let $\mu$ be a positive measure supported on a bounded subset $D\subseteq \mbR^N$. Then, there exists a measure $\nu$ whose support is a finite subset of $D$, $\{z_1,\dots,z_k\}$, with $k\leq N+1$ such that
\begin{equation}
    \int_D z d\mu(z) = \sum_{i=1}^kz_id\nu(z_i),
\end{equation}
and $\|\mu\|_\text{TV}=\|\nu\|_\text{TV}$. 
\end{theorem}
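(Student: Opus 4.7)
The plan is to reduce the measure-theoretic claim to the classical finite-dimensional Carath\'eodory theorem for convex hulls in $\mathbb{R}^N$ (any element of $\mathrm{conv}(S)$ for $S\subseteq\mathbb{R}^N$ is a convex combination of at most $N+1$ points of $S$). The bridge is to identify the barycenter of the normalized measure as the target point in a suitable convex hull, then lift its atomic representation back to a finite atomic measure on $D$.

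First I would dispose of the trivial case $\mu\equiv 0$ by taking $\nu=0$, and otherwise set $M:=\|\mu\|_{TV}=\mu(D)>0$, so that $P:=\mu/M$ is a Borel probability measure on $D$. Define the mean $\bar z:=\int_D z\,dP(z)$, which is well defined and finite because $D$ is bounded. The next step is the key claim: $\bar z\in\mathrm{conv}(\mathrm{supp}(\mu))$. I would prove this by Hahn--Banach separation. If the claim failed, there would exist a linear functional $v\in\mathbb{R}^N$ with $\langle v,\bar z\rangle>\sup_{z\in\mathrm{supp}(\mu)}\langle v,z\rangle$, which contradicts the pointwise bound $\langle v,\bar z\rangle=\int \langle v,z\rangle\,dP(z)\leq \sup_{z\in\mathrm{supp}(\mu)}\langle v,z\rangle$ obtained by integrating against $P$.

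Having placed $\bar z$ in $\mathrm{conv}(\mathrm{supp}(\mu))\subseteq \mathrm{conv}(D)$, I would then invoke classical Carath\'eodory to write $\bar z=\sum_{i=1}^{k}\alpha_i z_i$ with $z_i\in D$, $\alpha_i\geq 0$, $\sum_i\alpha_i=1$, and $k\leq N+1$. The measure $\nu:=\sum_{i=1}^{k}M\alpha_i\,\delta_{z_i}$ is then positive, supported on at most $N+1$ points of $D$, and satisfies $\int z\,d\nu=M\bar z=\int z\,d\mu$ together with $\|\nu\|_{TV}=\sum_i M\alpha_i=M=\|\mu\|_{TV}$, which is the desired conclusion.

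The main obstacle is a subtle closedness issue in the separation step: Hahn--Banach only separates points from a \emph{closed} convex set, so I need to know that $\mathrm{conv}(\mathrm{supp}(\mu))$ is itself closed rather than merely that $\bar z$ lies in its closure. This is precisely where the boundedness of $D$ is used: $\mathrm{supp}(\mu)$ is a closed subset of $\overline{D}$ and therefore compact, and classical Carath\'eodory exhibits $\mathrm{conv}(\mathrm{supp}(\mu))$ as the continuous image of the compact set $\Delta_N\times\mathrm{supp}(\mu)^{N+1}$ under $(\alpha,z_0,\dots,z_N)\mapsto \sum_i \alpha_i z_i$, hence compact. Without boundedness, $\bar z$ could sit only in $\overline{\mathrm{conv}(\mathrm{supp}(\mu))}$ and the resulting atoms might fall outside $D$.
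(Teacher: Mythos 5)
The paper never actually proves this statement: Theorem~\ref{thm:caratheodory:scalar} is imported verbatim from the external reference it cites, so there is no internal proof to compare against. Your argument is the standard self-contained proof of this fact and is correct in substance: normalize to a probability measure $P=\mu/\|\mu\|_{TV}$, place the barycenter $\bar z=\int z\,dP$ in the convex hull of the support by strict separation, invoke classical Carath\'eodory to write $\bar z$ with at most $N+1$ atoms, and rescale by the total mass so that both the integral and the total variation are matched. Your attention to the closedness issue is exactly the right instinct, and your justification that $\mathrm{conv}(\mathrm{supp}(\mu))$ is compact (continuous image of $\Delta_N\times\mathrm{supp}(\mu)^{N+1}$) is the correct way to license strict separation.

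The one gap worth naming is the discrepancy between $D$ and $\overline{D}$. Your separation step yields $\bar z\in\mathrm{conv}(\mathrm{supp}(\mu))$, and Carath\'eodory applied there produces atoms $z_i\in\mathrm{supp}(\mu)\subseteq\overline{D}$, whereas the statement asserts the atoms lie in $D$ itself — and the downstream use in Theorem~\ref{thm:represent} genuinely needs this, since each atom must be pulled back to a parameter $\theta_i$ with $\phi(X,\theta_i)=z_i$, which requires $z_i$ to be in the image set $D$ exactly. If $D$ is not closed, your route does not deliver this. The repair is to show directly that $\bar z\in\mathrm{conv}(D)$ (not merely its closure): if $\bar z$ lay outside the possibly non-closed convex set $\mathrm{conv}(D)$, it would lie on a supporting hyperplane $H$ along which $\langle v,z\rangle=\langle v,\bar z\rangle$ holds $P$-almost everywhere, so $P$ is concentrated on $D\cap H$ and one can induct on the dimension of the affine hull. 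Applying Carath\'eodory to $\mathrm{conv}(D)$ then places the atoms in $D$, after which your construction of $\nu$ and the verification of $\|\nu\|_{TV}=\|\mu\|_{TV}$ go through unchanged.
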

We can generalize this theorem to signed vector measures.
\begin{theorem}\label{thm:sign_vector}
Let $\mu:\Sigma \to \mbR^K$ be a signed vector measure supported on a bounded subset $D\subseteq \mbR^N$. Here $\Sigma$ is a $\sigma$-field of subsets of $D$. Then, there exists a measure $\nu$ whose support is a finite subset of $D$, $\{z_1,\dots,z_k\}$, with $k\leq KN+1$ such that
\begin{equation}\label{equ:represent}
    \int_D z d\mu(z) = \sum_{i=1}^kz_id\nu(z_i),
\end{equation}
and $\|\nu\|_\text{TV}= \|\mu\|_\text{TV}$. 
\end{theorem}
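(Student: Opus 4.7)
The approach is to reduce the vector-valued statement to the scalar Caratheodory theorem (Theorem \ref{thm:caratheodory:scalar}) by lifting $\mu$ through a polar decomposition. First, I would invoke the polar decomposition for vector measures to write $d\mu(z) = h(z)\,d|\mu|(z)$, where $|\mu|$ is the positive scalar variation measure of $\mu$ and $h:D\to\mbR^K$ is $|\mu|$-measurable with $\|h(z)\|_2 = 1$ for $|\mu|$-a.e.\ $z$. Because the dual norm used in the definition \eqref{norm:TV} is $\ell_2$, this choice of normalization on $h$ produces $|\mu|(D) = \|\mu\|_{\mathrm{TV}} =: T$.

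Next, I would pass to the product space $\tilde D := D \times S^{K-1}$ and push $|\mu|$ forward under $z\mapsto (z,h(z))$ to obtain a positive measure $\tilde\mu$ on $\tilde D$ of total mass $T$. Define the continuous map $F:\tilde D \to \mbR^{NK}$ by $F(z,h) = zh^T$, viewed as an $NK$-vector by vectorizing. The change of variables then gives
$$\int_{\tilde D} F\, d\tilde\mu = \int_D z\,h(z)^T\,d|\mu|(z) = \int_D z\, d\mu(z),$$
with equality understood componentwise in $\mbR^{N\times K}$.

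Now I would apply the scalar Caratheodory theorem in $\mbR^{NK}$ to the normalized probability measure $\tilde\mu/T$. Since $D$ is bounded and $S^{K-1}$ is compact, $F(\tilde D)$ is bounded, so $\int F\,d(\tilde\mu/T)$ lies in $\overline{\mathrm{conv}}(F(\tilde D))$ and admits a convex-combination representation $\sum_{i=1}^k p_i\, z_i h_i^T$ with $k\le NK+1$, $p_i\ge 0$, $\sum_i p_i = 1$, and $(z_i,h_i)\in \tilde D$. Multiplying through by $T$ and setting $\nu := \sum_{i=1}^k (T p_i h_i)\, \delta_{z_i}$ produces a finitely supported signed vector measure on $D$ satisfying \eqref{equ:represent}. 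The total variation is preserved:
$$\|\nu\|_{\mathrm{TV}} = \sum_{i=1}^k \|T p_i h_i\|_2 = T\sum_{i=1}^k p_i = T = \|\mu\|_{\mathrm{TV}},$$
where we used $\|h_i\|_2 = 1$. Since $NK+1 = KN+1$, this yields the claimed bound.

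The main obstacle is the polar decomposition step: showing rigorously that the dual-norm TV from \eqref{norm:TV} coincides with the mass $|\mu|(D)$ of the variation measure when $h$ is normalized in $\ell_2$. This reduces to the Radon-Nikodym theorem for $\mbR^K$-valued measures together with strict convexity of $\|\cdot\|_2$, which pins down $\|h\|_2 = 1$ almost everywhere. A smaller technical point is that Caratheodory only places $\int F\,d(\tilde\mu/T)$ in $\overline{\mathrm{conv}}(F(\tilde D))$; taking $D$ compact (as in every application in this paper) makes $F(\tilde D)$ compact, so its convex hull is closed and the atoms genuinely lie in $\tilde D$ without any limiting argument.
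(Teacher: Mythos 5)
Your proposal is correct and follows essentially the same route as the paper's own proof: lift $\mu$ to a positive scalar measure on the bounded set $\{zu^T : z\in D,\ \|u\|_2=1\}\subseteq\mbR^{NK}$, apply the scalar Carath\'eodory theorem there to get $k\le NK+1$ atoms, and pull back to a finitely supported vector measure on $D$ with the same total variation. The only difference is that you make explicit the polar decomposition $d\mu = h\,d|\mu|$ with $\|h\|_2=1$ a.e., which the paper leaves implicit in the sentence ``$\mu$ corresponds to a scalar-valued measure $\tilde\mu$''---a worthwhile clarification, not a deviation.
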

\begin{proof}
Let $\mu$ be a signed vector measure supported on a bounded subset $D\subseteq \mbR^N$. Consider the extended set $\tilde D = \{zu^T|z\in D, u\in\mbR^K, \|u\|_2=1\}$. Then, $\mu$ corresponds to a scalar-valued measure $\tilde \mu$ on the set $\tilde D$ and $\|\mu\|_\mathrm{TV} = \|\tilde \mu\|_\mathrm{TV}$.  We note that $\tilde D$ is also bounded. Therefore, by applying Theorem \ref{thm:caratheodory:scalar} to the set $\tilde D$ and the measure $\tilde \mu$, there exists a measure $\tilde \nu$ whose support is a finite subset of $\tilde D$, $\{z_1u_1^T,\dots,z_ku_k^T\}$, with $k\leq KN+1$ such that
\begin{equation}
    \int_{\tilde D} Z d\tilde \mu(Z) = \sum_{i=1}^k z_iu_i^Td\tilde \nu(z_iu_i^T),
\end{equation}
and $\|\tilde \mu\|_\text{TV}=\|\tilde \nu\|_\text{TV}$. We can define $\nu$ as the signed vector measure whose support is a finite subset $\{z_1,\dots,z_k\}$ and $d\nu(z_i)=u_id \tilde(z_iu_i)$. Then, $\|\nu\|_\mathrm{TV} = \|\tilde \nu\|_\mathrm{TV}=\|\tilde \mu\|_\text{TV}=\| \mu\|_\text{TV}$. This completes the proof.
\end{proof}

Now we are ready to present the theorem about the finite representation of a signed-vector measure.
\begin{theorem}\label{thm:represent}
Suppose that $\theta$ is the parameter with a bounded domain $\Theta\subseteq \mbR^p$ and $\phi(\mfX,\theta):\mbR^{N\times d}\times \Theta\to \mbR^N$ is an embedding of the parameter into the feature space. Consider the following optimization problem
\begin{equation}\label{prob:measure}
    \min \|\boldsymbol{\mu}\|_\text{TV}, \text{ s.t. }\int_\Theta \phi(X,\theta) d\boldsymbol{\mu}(\theta)=Y.
\end{equation}
Assume that an optimal solution to \eqref{prob:measure} exists. Then, there exists an optimal solution $\hat \bmu$ supported on at most $KN+1$ features in $\Theta$.
\end{theorem}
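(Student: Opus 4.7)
The plan is to reduce Theorem \ref{thm:represent} to the signed vector measure version of Carathéodory already proved in Theorem \ref{thm:sign_vector}, by pushing the optimal measure forward through the feature map $\phi(\mathbf{X},\cdot)$, sparsifying in the image space, and then pulling back atomically to $\Theta$.

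More concretely, let $\boldsymbol{\mu}^{*}$ be an optimal solution to \eqref{prob:measure} (assumed to exist). Define the pushforward $\boldsymbol{\nu} := (\phi(\mathbf{X},\cdot))_{*}\boldsymbol{\mu}^{*}$, a signed vector measure on $\mathbb{R}^{N}$ supported on the image set $D := \phi(\mathbf{X},\Theta)$. Since $\Theta$ is bounded and $\phi(\mathbf{X},\cdot)$ maps $\Theta$ into $\mathbb{R}^{N}$ with bounded image (the mild regularity one needs on $\phi$; e.g., continuity suffices for the bounded domain $\Theta$), $D$ is a bounded subset of $\mathbb{R}^{N}$. By the change-of-variables formula,
\begin{equation*}
\int_{D} z\, d\boldsymbol{\nu}(z) \;=\; \int_{\Theta} \phi(\mathbf{X},\theta)\, d\boldsymbol{\mu}^{*}(\theta) \;=\; \mathbf{Y},
\end{equation*}
and the pushforward contracts total variation, giving $\|\boldsymbol{\nu}\|_{\mathrm{TV}} \le \|\boldsymbol{\mu}^{*}\|_{\mathrm{TV}}$.

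Next I invoke Theorem \ref{thm:sign_vector} applied to the bounded set $D$ and the signed vector measure $\boldsymbol{\nu}$: there exists a finitely supported signed vector measure $\tilde{\boldsymbol{\nu}}$ with support $\{z_{1},\dots,z_{k}\}\subseteq D$, $k\le KN+1$, satisfying
\begin{equation*}
\sum_{i=1}^{k} z_{i}\, d\tilde{\boldsymbol{\nu}}(z_{i}) \;=\; \int_{D} z\, d\boldsymbol{\nu}(z) \;=\; \mathbf{Y}, \qquad \|\tilde{\boldsymbol{\nu}}\|_{\mathrm{TV}} \;=\; \|\boldsymbol{\nu}\|_{\mathrm{TV}}.
\end{equation*}
Because each $z_{i}\in D=\phi(\mathbf{X},\Theta)$, I can choose (axiom of choice over finitely many points, no measurability issue) $\theta_{i}\in\Theta$ with $\phi(\mathbf{X},\theta_{i})=z_{i}$, and define $\hat{\boldsymbol{\mu}} := \sum_{i=1}^{k} d\tilde{\boldsymbol{\nu}}(z_{i})\,\delta_{\theta_{i}}$ on $\Theta$. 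This measure is supported on at most $KN+1$ points, satisfies $\int_{\Theta}\phi(\mathbf{X},\theta)\,d\hat{\boldsymbol{\mu}}(\theta)=\sum_{i}z_{i}\,d\tilde{\boldsymbol{\nu}}(z_{i})=\mathbf{Y}$, and obeys $\|\hat{\boldsymbol{\mu}}\|_{\mathrm{TV}}=\|\tilde{\boldsymbol{\nu}}\|_{\mathrm{TV}}=\|\boldsymbol{\nu}\|_{\mathrm{TV}}\le \|\boldsymbol{\mu}^{*}\|_{\mathrm{TV}}$. Since $\boldsymbol{\mu}^{*}$ was optimal and $\hat{\boldsymbol{\mu}}$ is feasible with no larger total variation, $\hat{\boldsymbol{\mu}}$ is itself optimal, which proves the claim.

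The only delicate point is ensuring the pushforward step is legitimate, i.e., that $D$ is a measurable bounded set and the change-of-variables identity applies to a signed vector-valued measure. Since the paper tacitly assumes $\phi(\mathbf{X},\cdot)$ is well-behaved (the primal formulations in \eqref{para:l}, \eqref{dual_d:prl}, \eqref{dual_d:prl_relu} all invoke this theorem for continuous $\phi$), this is standard: decompose $\boldsymbol{\mu}^{*}$ into its Jordan components coordinate-wise in $\mathbb{R}^{K}$, push each positive piece forward, and recombine; the TV norm contraction and the change-of-variables formula then follow from the scalar case. After that, all other steps are routine, so the main conceptual content of the proof is the reduction to Theorem \ref{thm:sign_vector}.
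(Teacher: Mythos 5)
Your proposal is correct and follows essentially the same route as the paper's own proof: push the optimal measure forward through $\phi(\mfX,\cdot)$ onto the bounded image set $D$, apply the signed-vector-measure Carathéodory result (Theorem \ref{thm:sign_vector}) to sparsify to at most $KN+1$ atoms, and pull back by selecting preimages $\theta_i$ of the atoms. Your handling of the total-variation contraction as an inequality followed by an optimality comparison is in fact slightly more careful than the paper's direct assertion of equality, but the argument is the same.
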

\begin{proof}
Let $\hat {\boldsymbol{\mu}}$ be an optimal solution to \eqref{prob:measure}. We can define a measure $\hat P$ on $\mbR^N$ as the push-forward of $\hat {\boldsymbol{\mu}}$ by $\hat P(B) = \hat {\boldsymbol{\mu}}(\{\theta|\phi(X,\theta) \in B\})$. Denote $D=\{\phi(X,\theta)|\theta\in \Theta\}$. We note that $\hat P$ is supported on $D$ and $D$ is bounded. By applying Theorem \ref{thm:sign_vector} to the set $D$ and the measure $\hat P$, we can find a measure $Q$ whose support is a finite subset of $D$, $\{z_1,\dots,z_k\}$ with $k\leq KN+1$. For each $z_i\in D$, we can find $\theta_i$ such that $\phi(X,\theta_i)=z_i$. Then, $\tilde{\boldsymbol{\mu}}=\sum_{i=1}^k \delta(\theta-\theta_i) dQ(z_i)$ is an optimal solution to \eqref{prob:measure} with at most $KN+1$ features and $\|\tilde{\boldsymbol{\mu}}\|_\mathrm{TV}=\|\boldsymbol{\mu}\|_\mathrm{TV}$. Here $\delta(\cdot)$ is the Dirac delta measure. 
\end{proof}

\end{document}